\documentclass{article}

\usepackage[final]{colm2026_conference}

\usepackage{graphicx}
\usepackage{amsmath,amssymb}
\usepackage{booktabs}
\usepackage{multirow}
\usepackage{hyperref}
\usepackage{url}

\usepackage{float}
\usepackage{subcaption}
\usepackage{wrapfig}

\usepackage{amsthm}
\newtheorem{proposition}{Proposition}

\usepackage{amsmath,amsfonts,bm}

\def\eqref#1{equation~\ref{#1}}

\def\1{\bm{1}}

\DeclareMathAlphabet{\mathsfit}{\encodingdefault}{\sfdefault}{m}{sl}
\SetMathAlphabet{\mathsfit}{bold}{\encodingdefault}{\sfdefault}{bx}{n}

\graphicspath{{imgs/}}

\usepackage[most]{tcolorbox}
\usepackage{listings}
\usepackage{xcolor}

\newcommand{\ph}[1]{\texttt{\{#1\}}}

\lstdefinestyle{promptstyle}{
  basicstyle=\ttfamily\small,
  breaklines=true,
  columns=fullflexible,
  keepspaces=true,
  showstringspaces=false
}

\newtcblisting{promptbox}{
  colback=gray!5,
  colframe=gray!60,
  boxrule=0.5pt,
  arc=2pt,
  left=1mm,right=1mm,top=1mm,bottom=1mm,
  listing only,
  listing options={style=promptstyle}
}

\title{Pareto-Optimal RTL Code Generation via Multi-Objective Reinforcement Learning with Large Language Models}

\author{%
\textbf{Koki Takeshita}\quad
\textbf{Kenji Uebayashi}\quad
\textbf{Hikaru Tomonari}\quad
\textbf{Masaya Tsunokake}\\
\textbf{Koichiro Yawata}\quad
\textbf{Tatsuya Sasaki}\quad
\textbf{Yusuke Ohtsubo}\quad
\textbf{Kota Dohi}\quad
\textbf{Yasuhiro Sogawa}\\
\textnormal{Research \& Development Group, Hitachi, Ltd.}\\
\textnormal{\textbf{Correspondence:} koki.takeshita.cx@hitachi.com}
}

\begin{document}

\maketitle

\begin{abstract}
Recent studies have proposed reinforcement learning with verifiable rewards (RLVR) methods that enable LLMs to generate high-quality RTL circuits from natural-language specifications.
Existing approaches optimize circuit PPA (Power, Performance, Area) by aggregating multiple objectives into a single scalar reward, making it difficult to simultaneously generate diverse Pareto-optimal solutions across different trade-offs such as low power, low latency, and small area.
However, in practical design, the relative importance of each objective varies depending on product requirements and design constraints; thus, obtaining a diverse and high-quality candidate set is beneficial in that designers can choose a preferred trade-off a posteriori.
In this work, we are the first to propose a learning method for LLM-based circuit generation that optimizes, not the quality of a single circuit, but the quality and diversity of the entire circuit set obtained through multiple generations in the PPA objective space.
The proposed method consists of (1) a generation-based training pipeline that reformulates coverage improvement of the entire Pareto solution set into a sequential circuit generation problem, and (2) a reward function based on Hypervolume Contribution (HVC) that measures how much a generated circuit contributes to the Pareto front.
Evaluation on the RTLLM v2 benchmark shows that the proposed method achieves +13.3 percentage points in relative Hypervolume over a baseline without PPA optimization and +14.4 percentage points over a single-objective reinforcement learning method, while also demonstrating superiority in circuit diversity.

\end{abstract}

\section{Introduction}

Recently, reinforcement learning with verifiable rewards (RLVR), such as GRPO, has emerged~\citep{shao2024deepseekmath}, and the performance of LLMs on problems requiring reasoning capabilities has improved dramatically.
Led by OpenAI's o1 and o3~\citep{openai2024o1,openai2025o3} and DeepSeek-R1~\citep{guo2025deepseekr1}, subsequent models have been continuously released~\citep{yang2025qwen3,teamolmo2025olmo3}.
RLVR enables the acquisition of advanced logical reasoning capabilities in tasks where the quality of answers can be automatically verified, such as mathematics, programming, and question answering.

This paradigm has also been applied to electronic design automation (EDA), particularly to the NL-Code task of generating hardware description languages (HDLs) such as Verilog~\citep{verilog_history} from natural-language specifications.
\citet{zhu2025qimengcodevr1,wang2025verireason} automatically verify whether a generated circuit conforms to its natural-language specification using behavioral simulators such as Icarus Verilog~\citep{IcarusVerilog}, and improve specification compliance by learning with binary verification results as rewards.
\citet{chen2025chipseekr1,hsiao2025buffalo} additionally estimate the PPA (Power, Performance, Area) of generated circuits using downstream circuit evaluation tools such as OpenROAD~\citep{OpenROAD}, aggregate PPA into a single scalar reward via multiplicative or weighted-sum scalarization, and perform reinforcement learning.
This enables generating not only specification-compliant but also higher-performance circuits.

However, these existing studies optimize the generation quality of a single circuit under the scalarization function used during training.
That is, trained models do not explicitly learn to obtain a solution set spanning different trade-offs such as low power, low latency, and small area.
On the other hand, in circuit design, the relative importance of objectives such as power, delay, and area varies depending on product requirements and design constraints, and trade-offs frequently arise among these objectives.
Therefore, obtaining a diverse set of Pareto-optimal candidates is beneficial because it allows designers to choose a preferred candidate a posteriori according to the desired trade-off~\citep{Dobes2013,Visan2022,Liu2013}.
Furthermore, diverse and high-quality circuit candidates may be useful for generating promising parent individuals in search methods that combine LLM-based generation with evolutionary algorithms, such as AlphaEvolve~\citep{AlphaEvolve}.
(Details on the benefits of diverse and high-quality RTL generation are discussed in Section~\ref{sec:benefit_of_rtl_pareto}.)

In this work, for LLM-based circuit generation, we propose a learning method that aims to improve the quality and diversity in the PPA objective space of the entire circuit set obtained through multiple generations, rather than maximizing a single scalar reward for a single circuit generation.
To this end, (1) we introduce a generation-based training pipeline that reformulates the set-level optimization problem of improving coverage of the entire Pareto set into a sequential decision-making problem of generating the next candidate to add, enabling learning via GRPO-based methods.
(2) We introduce a reward function based on Hypervolume Contribution (HVC) that encourages the generation of solution groups close to the ideal Pareto front.
For the novel problem setting of Pareto solution exploration with LLMs in circuit generation, we define an evaluation metric called relative Hypervolume (rHV) and evaluate on RTLLM v2, a standard benchmark for NL-Code tasks.
The proposed method achieves +13.3 percentage points in rHV over the baseline and +14.4 percentage points over the single-objective learning method.

\section{Background}

\subsection{Abstraction Levels and Analysis Methods of Hardware Description Languages}

In digital hardware design, circuits are described using hardware description languages (HDLs), which serve a role analogous to programming languages in software.
HDLs are broadly categorized by their level of abstraction, ranging from high-level behavioral descriptions that focus on what the circuit should do, to low-level structural descriptions that specify connections between logic gates or standard cells.
Among these, one of the central representations used in LLM-based circuit generation is Register Transfer Level (RTL), which describes data transfers between registers and clock-cycle behavior.
RTL is widely used in industry and is amenable to behavioral verification and performance evaluation through simulation~\citep{DeepRTL2}.
Representative languages include Verilog~\citep{verilog_history} and VHDL~\citep{vhdl_overview}.

Test code for verifying whether an RTL-designed circuit correctly satisfies its specification is also written in HDL.
This is called a testbench.
Specification verification is performed by executing the testbench on a dedicated behavioral simulator.
For Verilog, a representative behavioral simulator is Icarus Verilog~\citep{IcarusVerilog}.

To estimate the PPA (Power, Performance, Area) of an RTL-designed circuit, it must first be converted to a lower-level gate-level representation.
This conversion is called logic synthesis.
A representative logic synthesis tool is Yosys~\citep{WolfGlaser2013Yosys}.
After logic synthesis, physical design tools such as OpenROAD~\citep{OpenROAD} perform placement and routing to estimate PPA.

\subsection{Benefits of Diverse and High-Quality RTL Generation}\label{sec:benefit_of_rtl_pareto}

In multi-objective circuit design, the relative importance of objectives such as power, delay, and area varies depending on product requirements and design constraints.
Therefore, obtaining a diverse set of Pareto-optimal candidates is beneficial because it allows designers to choose a preferred candidate a posteriori according to the desired trade-off~\citep{Dobes2013,Visan2022,Liu2013}.
While similar trade-off exploration can also be performed using downstream logic synthesis and physical design tools, such evaluation loops are computationally expensive, requiring many synthesis runs, with each candidate taking minutes to hours; thus, iterative downstream evaluation is costly~\citep{Bai2023}.
\cite{AlphaEvolve} also argues that directly manipulating RTL to improve circuits has the advantage of performing optimization early in the design flow.
For this reason, generating diverse and high-quality circuits at the upstream RTL stage is practically useful in that promising alternatives can be identified before invoking costly downstream evaluation.

Furthermore, the diversity of RTL candidates may also be useful in search methods that combine code generation and improvement by trained LLMs with evolutionary algorithms, such as AlphaEvolve~\citep{AlphaEvolve}.
AlphaEvolve maintains a database of previously generated programs and their evaluation results, reusing them as parent candidates and sources of inspiration while generating next candidates.
Experiments have verified its effectiveness across diverse domains, and in particular, for a task targeting TPU arithmetic circuits, it demonstrated that directly modifying RTL can present optimization candidates early in the design flow.
As an important implication, \cite{AlphaEvolve} states that retaining candidates that are good under different criteria, rather than keeping only candidates that are good under a single criterion, can stimulate the generation of more diverse candidate solutions.
This insight suggests that diverse and high-quality RTL candidates may be valuable not only for human designers but also as a promising parent candidate pool for subsequent evolutionary search.

\section{Related Work}

\subsection{RLVR in Circuit Generation}

ChipSeek-R1~\citep{chen2025chipseekr1} proposed a reinforcement learning framework for RTL generation based on a hierarchical reward design combining syntactic correctness, functional correctness, synthesizability, and PPA.
For PPA rewards, it aggregates PPA values estimated using OpenROAD~\citep{OpenROAD} into a single reward via a multiplication-based scalarization function, achieving RTL code generation with PPA performance exceeding human designs.
BUFFALO~\citep{hsiao2025buffalo} applies GRPO to the buffer tree generation task, updating model parameters based on circuit quality metrics obtained from downstream physical design tool evaluation.
Specifically, it normalizes the timing improvement and area increase and aggregates them into a scalar reward via a weighted sum, which is then maximized during training.

All of these methods optimize single circuit generation under a single scalarization function, and do not explicitly learn to generate diverse and high-quality solution sets spanning different PPA trade-offs.
In contrast, this work aims to improve the quality and diversity of the entire circuit set obtained through multiple generations for a given specification in the PPA objective space.

\subsection{Evaluation of Multi-Objective Reinforcement Learning}

\citet{teoh2025generalization} proposed the Normalized Hypervolume Generalization Ratio (NHGR) for evaluating cross-environment generalization in multi-objective reinforcement learning.
NHGR is defined by normalizing each objective based on the element-wise minimum and maximum of the optimal Pareto front for each evaluation context, then dividing the normalized hypervolume of the approximate front obtained by the generalist by the normalized hypervolume of the optimal front.
This normalization is introduced to prevent the difficulty of comparison that arises when ordinary Hypervolume depends on the scale of objective values and the range of achievable Pareto fronts for each problem.
Moreover, by defining it as a ratio, it offers the advantage of intuitively interpreting how good the quality and diversity of the solution set are relative to the ideal Pareto front.
When the true Pareto front is unavailable, non-dominated value vectors from specialist agents trained independently for each context are aggregated as an approximation of the Pareto front.

Our setting shares the motivation with \citet{teoh2025generalization} but differs in the problem unit.
In \citet{teoh2025generalization}, the evaluation target is the ``environment context,'' and specialists can be trained for each context.
In our work, we target LLM-based circuit generation, with evaluation units being individual samples in the benchmark.
Thus, training a specialist LLM separately for each sample to construct a reference front is impractical.
Instead, we integrate solutions obtained from all models and all trials for each problem and use the resulting non-dominated set as a pseudo Pareto front.

\section{Proposed Method}

Figure~\ref{fig:overview} shows an overview of the proposed method.
The method is a generation-based training pipeline that alternates between ``reinforcement learning with HVC reward'' and ``dataset reconstruction.''
By sequentially learning to generate the next circuit to add given the circuit history obtained so far, the method improves the quality and diversity of the overall circuit set obtained in the end.
``Reinforcement learning with HVC reward'' provides the LLM, conditioned on circuit history, with Hypervolume Contribution (HVC) as a reward indicating how much the generated circuit contributes to the Pareto front relative to the existing circuit set, and trains it using reinforcement learning.
``Dataset reconstruction'' aggregates circuits and their evaluation results generated during training, merges them as new circuit history into the current training dataset, and constructs the next-generation training dataset.
This enables efficient learning of Pareto solution exploration capabilities while progressively extending circuit history.

\subsection{Reinforcement Learning with HVC Reward}

In this phase, the model takes as input a specification written in natural language and previously generated circuits along with their PPAs during training, and learns to output the next circuit to generate via reinforcement learning.
The dataset used consists of an initial dataset containing specifications written in natural language, corresponding testbenches, and ground-truth circuit examples, merged with circuit history extracted through ``dataset reconstruction.''
As a reward, we use Hypervolume Contribution (HVC), which indicates how much the Hypervolume increases in the objective function space when the circuit is added to the existing circuit set.
This enables the model to learn to generate circuits with large contributions to the Pareto front relative to the existing circuit set at each generation.
At test time, the model recursively adds generated circuits and their evaluation values to the circuit history while generating circuits.
Since the model is trained to generate circuits with the largest Hypervolume Contribution at each generation iteration, the quality and diversity of the overall circuit set obtained in the end are expected to improve.

\paragraph{Reinforcement Learning Algorithm}
We use GRPO (Group Relative Policy Optimization)~\citep{shao2024deepseekmath}, which generates multiple responses $\{y_i\}_{i=1}^{G}$ from the old policy $\pi_{\theta_{\mathrm{old}}}$ for each input $q$, computes advantages from relative rewards within the group, and updates the policy with clipping and KL regularization without a separate value function.

\begin{subequations}\label{eq:grpo}
\begin{gather}
\begin{aligned}
\mathcal{J}_{\mathrm{GRPO}}(\theta)
&= \mathbb{E}_{q,\, \{y_i\}_{i=1}^{G} \sim \pi_{\theta_{\mathrm{old}}}}
\Biggl[
\frac{1}{G}\sum_{i=1}^{G}\frac{1}{|y_i|}\sum_{t=1}^{|y_i|}
\Bigl(
\min\!\bigl(
r_{i,t}\,\hat{A}_{i,t},\;
\operatorname{clip}(r_{i,t},\,1{\pm}\epsilon)\,\hat{A}_{i,t}
\bigr) \\
&\qquad\qquad\qquad\qquad\qquad\qquad\qquad
-\beta\, D_{\mathrm{KL}}\!\bigl[\pi_{\theta}\,\|\,\pi_{\mathrm{ref}}\bigr]
\Bigr)
\Biggr]
\end{aligned} \label{eq:grpo_obj} \\[4pt]
r_{i,t} = \frac{\pi_{\theta}(y_{i,t} \mid q,\, y_{i,<t})}{\pi_{\theta_{\mathrm{old}}}(y_{i,t} \mid q,\, y_{i,<t})},\qquad
\hat{A}_{i,t} = \hat{A}_{i} = \frac{R_i - \mathrm{mean}(\{R_j\}_{j=1}^{G})}{\mathrm{std}(\{R_j\}_{j=1}^{G})} \label{eq:grpo_vars}
\end{gather}
\end{subequations}

\begin{figure}[t]
  \centering
  \includegraphics[width=0.9\linewidth]{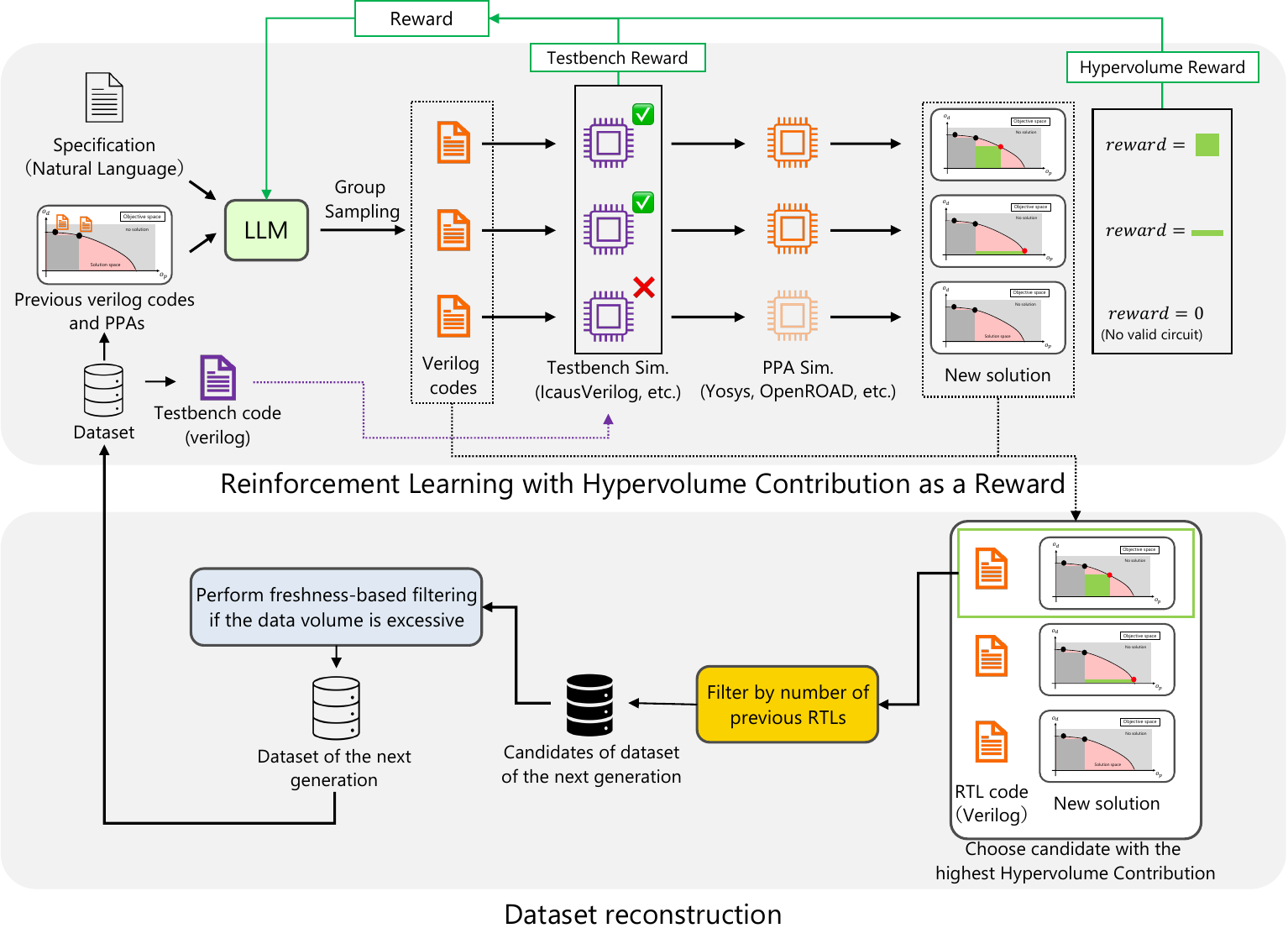}
  \caption{Overview of the proposed method. By alternating between reinforcement learning with HVC reward (top) and dataset reconstruction (bottom), the model learns Pareto solution exploration capabilities while progressively extending circuit history.}
  \label{fig:overview}
\end{figure}

\paragraph{Input Prompt}
The input prompt consists of circuit specification $s$ and circuit history $\mathcal{H}$.
The circuit history lists previously generated circuits and their PPA values.
This allows the model to obtain information for exploring new Pareto solutions different from existing solutions.
The actual prompt template is shown in Appendix~\ref{appendix:prompt}.

\paragraph{Hypervolume Contribution (HVC)}
To compute HVC, testbench verification is first performed on Icarus Verilog~\citep{IcarusVerilog} for each candidate generated through group sampling.
Only candidates that pass are subjected to PPA estimation using Yosys~\citep{WolfGlaser2013Yosys} and OpenROAD~\citep{OpenROAD}, following a similar approach to \cite{chen2025chipseekr1}.
Candidates failing the testbench or with invalid PPA receive $\mathrm{HVC}=0$.
For candidates with valid PPA, normalization based on the ground-truth circuit's PPA is performed to define objective function vectors.
The objective function vector for candidate $i$ is:
\begin{equation}
\mathbf{o}_i=(o_{P,i}, o_{D,i}, o_{A,i}),\quad
o_{P,i}=\frac{P_{\text{golden}}}{P_{\text{gen},i}+\epsilon},\;o_{D,i}=\frac{D_{\text{golden}}}{D_{\text{gen},i}+\epsilon},\;o_{A,i}=\frac{A_{\text{golden}}}{A_{\text{gen},i}+\epsilon}
\label{eq:objective}
\end{equation}
where $P_{\text{golden}}, D_{\text{golden}}, A_{\text{golden}}$ are the ground-truth circuit's PPA, $P_{\text{gen},i}, D_{\text{gen},i}, A_{\text{gen},i}$ are candidate $i$'s PPA, and $\epsilon$ is a small constant to avoid division by zero.
This normalization makes each component 1.0 for performance equal to the ground-truth circuit and greater than 1.0 when exceeding it.
Note that $P_{\text{gen},i}, D_{\text{gen},i}, A_{\text{gen},i}$ are all better when smaller.

Figure~\ref{fig:hypervolume} shows an example of multiple candidates and their HVCs in the objective function space.
HVC is defined as the increment in Hypervolume with origin as the reference point when adding candidate $i$:
\begin{equation}
\mathrm{HVC}(\mathbf{o}_i)=\mathrm{HV}(\{\mathbf{o}_j\mid j\le i\})-\mathrm{HV}(\{\mathbf{o}_j\mid j<i\})
\end{equation}
where $\mathrm{HV}$ is a function that computes the Hypervolume in the objective function space.
We use the implementation of \cite{Fortin2012DEAP} for computing Hypervolume.
Since the objective function $\mathbf{o}_i$ is a maximization metric, a larger HVC means that the candidate contributes more to the Pareto front.
Due to high reward variance, reward values are clipped to the range $[0,\, 3.0]$ for training stabilization.

The method uses reward $R_i$ as a combination of testbench verification reward and HVC.
Candidates passing the testbench receive a pass reward of $1.0$ plus HVC, while failing candidates receive $0$.
In summary, the variables in Equation~\ref{eq:grpo} are determined as follows:
\begin{equation}\label{eq:reward}
q = (s,\; \mathcal{H}),\quad
y_i \sim \pi_{\theta_{\mathrm{old}}}(\cdot \mid q),\quad
R_i = \begin{cases}
  1.0 + \mathrm{HVC}(\mathbf{o}_i) & \text{if testbench passed} \\
  0 & \text{otherwise}
\end{cases}
\end{equation}

\begin{wrapfigure}{r}{0.5\linewidth}
  \centering
  \vspace{-1em}
  \includegraphics[width=0.95\linewidth]{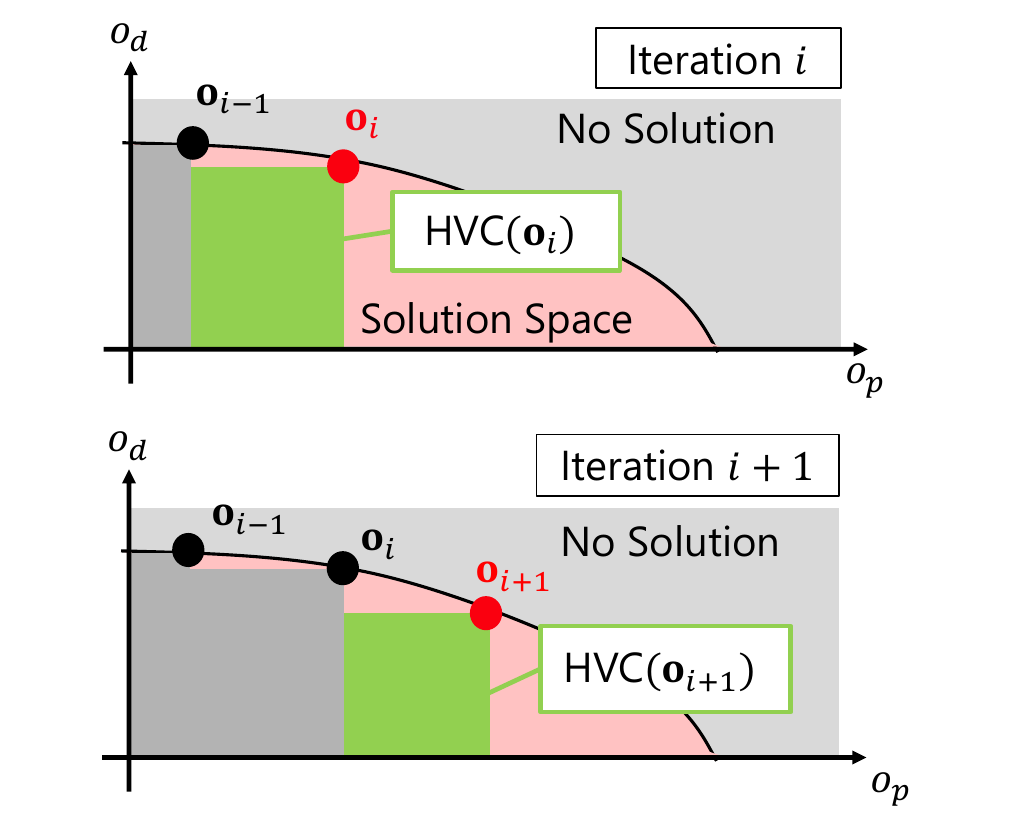}
  \caption{Example of Hypervolume Contribution (HVC). In the objective function space, the HVC is the Hypervolume increment with origin as the reference point when adding candidate $i$. A larger HVC means the candidate contributes more to the Pareto front.}
  \label{fig:hypervolume}
  \vspace{-1em}
\end{wrapfigure}

\subsection{Generation-based Dataset Reconstruction}

After each generation of training, circuits generated during training are aggregated to construct the next-generation training dataset.
This process progressively extends circuit history, enabling the model to learn Pareto solution exploration with longer histories as training progresses.

Specifically, for each candidate generated during training, the generated circuit, its PPA value, HVC, and the input circuit history are recorded.
Among circuits generated for the same original sample (i.e., sharing the same specification and circuit history), the candidate with the highest HVC is selected.
The selected circuit is then concatenated with the previous circuit history and added to the next-generation dataset.
To prevent excessive input length growth, samples with a history length of 5 or more are removed.
Additionally, to control dataset size, an upper limit of 1,000 samples per history length is imposed, with the oldest samples removed when this limit is exceeded.
Finally, the original dataset without circuit history is added to complete the next-generation training dataset.

Through this dataset reconstruction, starting from history length 0, samples with history lengths 1, 2, \ldots are added as generations progress.
By training on circuit histories of various lengths, the model acquires sequential Pareto solution exploration capabilities.

\section{Experiments}\label{sec:experiment}

\subsection{Baselines}

We evaluate all methods on two publicly available RTL-generation models as base models: \textbf{VeriReason-3B}~\citep{wang2025verireason} and \textbf{CodeV-R1-Distil-7B}~\citep{zhu2025qimengcodevr1}.
Using two base models with different training recipes allows us to verify that the effectiveness of the proposed method is not specific to a particular base model.
\textbf{VeriReason-3B} is a Qwen2.5-Coder-3B-based model trained through supervised fine-tuning on RTL generation data and GRPO reinforcement learning based on testbench feedback, but it does not directly optimize PPA objectives as rewards.
\textbf{CodeV-R1-Distil-7B} is a Qwen2.5-Coder-7B-Instruct-based reasoning-enhanced Verilog generation model that similarly emphasizes functional correctness without directly optimizing PPA objectives.
Both are thus positioned as existing strong RTL generation models that emphasize functional correctness.

On top of each base model, we compare the following five methods, designed from three perspectives to comprehensively verify the effectiveness of the proposed method:
(1) the unmodified base model without PPA optimization (Base) to evaluate the effect of PPA-aware training;
(2) single-scalar-reward reinforcement learning (Single-Objective PPA) to verify the superiority of HVC reward as multi-objective optimization;
(3) models using the same generation-based training pipeline as the proposed method but with different reward functions (Sequential TB, Sequential TB+PPA) to verify that generation-based training alone is insufficient and that combining it with HVC reward is essential.

\textbf{Base} is the unmodified base model (VeriReason-3B or CodeV-R1-Distil-7B) without any additional PPA-aware training, and is positioned as an existing strong RTL generation model that emphasizes functional correctness.
\textbf{Single-Objective PPA} is a model additionally trained with GRPO on the base model using testbench pass reward plus scalarized PPA reward similar to \citet{chen2025chipseekr1} (the formula is shown in Appendix~\ref{appendix:ppa_reward}), and corresponds to the single-objective reinforcement learning baseline in this work.
\textbf{Sequential TB} is a model trained using the same generation-based training pipeline as the proposed method on the base model, with only the reward function replaced by testbench pass reward.
It performs sequential generation conditioned on circuit history during evaluation, but PPA-based optimization or contributions to the Pareto front are not included in the reward.
\textbf{Sequential TB+PPA} is a model trained using the same generation-based training pipeline as the proposed method on the base model, with only the reward function replaced by the sum of testbench pass reward and PPA scalar reward.
That is, training is performed in a sequential generation setting, but the optimization target remains a single scalar reward without directly optimizing the quality or diversity of the entire solution set.
\textbf{Sequential HVC (Ours)} is a model trained using the proposed generation-based training pipeline on the base model with reward based on testbench pass reward and Hypervolume Contribution (HVC).
By learning to sequentially generate candidates with large contributions to the Pareto front relative to existing circuit history, the method aims to improve the quality and diversity of the entire solution set.

\subsection{Dataset}

Training data were constructed through two-stage filtering based on VeriReason's~\citep{wang2025verireason} hard level (1,149 problems).
First, 889 problems where the ground-truth circuit passed testbench verification and PPA estimation were extracted and used as training data for non-generation-based single-objective training (Single-Objective PPA).
Next, for these 889 problems, multiple circuit candidates were generated using CodeV-R1-Distil-7B~\citep{zhu2025qimengcodevr1}, and 78 problems where negative correlation ($r<0$) was observed between area and delay were extracted.
This is because problems where the Pareto front degenerates to a single point provide weak learning signals for multi-objective optimization.
These 78 problems were used for generation-based training (Sequential TB/Sequential TB+PPA/Sequential HVC).
For evaluation, 45 problems out of 50 in RTLLM v2~\citep{liu2024openllmrtl} where testbench verification and PPA estimation succeeded were used.

\subsection{Evaluation Method}

For each RTLLM v2 problem, circuits were generated by recursively adding generated circuits and their evaluation values to the circuit history, and evaluated using the following three metrics.
Evaluation was conducted with high-temperature sampling ($T=0.5$) with multiple trials per problem to reduce variance.

\paragraph{Relative Hypervolume (rHV)}
This metric is inspired by the NHGR of \citet{teoh2025generalization} described in Section~3.2.
In this work, for each problem $i$, we construct a \textbf{reference archive} $\Pi_i = \mathrm{ND}(U_i)$ as the non-dominated set of the union $U_i$ of solutions obtained from all models and all trials, and define rHV as follows:
\begin{equation}
  \mathrm{rHV}_{m,i,j} = \frac{\mathrm{HV}(\tilde{S}_{m,i,j}; \mathbf{r})}{\mathrm{HV}(\tilde{\Pi}_i; \mathbf{r})}
\end{equation}
where $S_{m,i,j}$ is the set of circuits generated by method $m$ on problem $i$ in trial $j$, $\tilde{S}_{m,i,j}$ and $\tilde{\Pi}_i$ are these solution sets mapped to the objective space normalized to $[0,\, 1]$ using the minimum and maximum PPA values in $U_i$, and $\mathbf{r}$ is the reference point.
rHV takes values in $[0,\, 1]$, with values closer to 1.0 indicating achievement of a Pareto front closer to the reference archive.
For aggregate evaluation, we report the mean across all problems and trials: $\bar{\mathrm{rHV}}_m$.
See Appendix~\ref{sec:rhv_details} for detailed mathematical definitions of rHV.

\paragraph{Win Rate}
For each problem $i$ and trial $j$, a win is assigned to the model achieving the highest Hypervolume among all compared models.
When multiple models achieve the same maximum Hypervolume, the win is split equally among them.
Formally, the Win Rate for model $m$ on problem $i$ and trial $j$ is defined as:
\begin{equation}
  \mathrm{WR}_{m,i,j} = \frac{\mathbf{1}[\mathrm{HV}_{m,i,j} = \max_{m'} \mathrm{HV}_{m',i,j}]}{|\{m' : \mathrm{HV}_{m',i,j} = \max_{m''} \mathrm{HV}_{m'',i,j}\}|}
\end{equation}
where $\mathbf{1}[\cdot]$ is the indicator function.
For aggregate evaluation, we report the mean across all problems and trials: $\bar{\mathrm{WR}}_m = \frac{1}{n_d n_t} \sum_{i,j} \mathrm{WR}_{m,i,j}$.
Win Rate takes values in $[0,\, 1]$, with larger values indicating greater superiority over other models.
Since Win Rate considers only relative wins within each problem, it can be directly aggregated without normalization even when Hypervolume scales vary significantly across problems.

\paragraph{Circuit Diversity}
The number of circuits with unique PPA values generated per problem and trial.
Higher values indicate that more diverse circuits can be generated in a single trial.

\section{Results}

\subsection{Relative Hypervolume Comparison}

\begin{table}[t]
\centering
\caption{Relative Hypervolume (rHV) for each model. Each column shows the rHV measured after a given number of sequential generation iterations (\textit{iter}), where the model generates a new circuit conditioned on the circuit history accumulated so far.}
\label{tab:rhv_results}
\begin{tabular}{llccc}
\toprule
\textbf{Base Model} & \textbf{Method} & \textbf{rHV (iter=1)} & \textbf{rHV (iter=5)} & \textbf{rHV (iter=10)} \\
\midrule
\multirow{5}{*}{VeriReason-3B} & Base & 13.1\% & 17.5\% & 19.1\% \\
& Single-Objective PPA & 10.4\% & 15.4\% & 18.0\% \\
& Sequential TB & 15.5\% & 18.5\% & 19.2\% \\
& Sequential TB+PPA & 17.2\% & 20.6\% & 21.6\% \\
& Sequential HVC (Ours) & \textbf{22.9\%} & \textbf{30.7\%} & \textbf{32.4\%} \\
\midrule
\multirow{5}{*}{CodeV-R1-Distil-7B} & Base & 23.1\% & 34.1\% & 38.4\% \\
& Single-Objective PPA & 23.5\% & 34.5\% & 37.8\% \\
& Sequential TB & \textbf{28.4\%} & 32.9\% & 33.5\% \\
& Sequential TB+PPA & 27.7\% & 31.2\% & 32.0\% \\
& Sequential HVC (Ours) & 26.7\% & \textbf{40.6\%} & \textbf{43.2\%} \\
\bottomrule
\end{tabular}
\end{table}

Table~\ref{tab:rhv_results} shows the relative Hypervolume (rHV) for each model.
The proposed method (Sequential HVC) achieved the highest rHV at iter=5 and iter=10.
With VeriReason-3B as the base model, at iter=10, it showed 32.4\% compared to Base's 19.1\% (+13.3 percentage points) and Single-Objective PPA's 18.0\% (+14.4 percentage points).
With CodeV-R1-Distil-7B as the base model, at iter=10, it likewise showed 43.2\% compared to Base's 38.4\% (+4.8 percentage points) and Single-Objective PPA's 37.8\% (+5.4 percentage points), confirming that the effectiveness of the proposed method does not depend on the choice of base model.

\subsection{Win Rate Comparison}\label{sec:win_rate_comparison}

Figure~\ref{fig:win_rate} shows the Win Rate progression per generation count (iteration).
The proposed method (Sequential HVC) achieved the highest Win Rate at almost all iterations in both base models; the only exception is iter=1 for CodeV-R1-Distil-7B, where Sequential TB is marginally higher.
As the generation count increases, its Win Rate rises steeply and its lead over the other methods widens.
This indicates that the proposed method not only finds good circuits initially but also continues exploration using circuit history to acquire more diverse Pareto-optimal solutions.

Notably, the Win Rate of the proposed method is maintained at a high level even beyond iter=5, the maximum circuit-history length seen during training, rather than degrading for longer exploration.
This confirms that the proposed method generalizes to exploration horizons longer than those used in training.

Conversely, the Win Rate of Sequential TB and Sequential TB+PPA declines as the number of iterations increases, leaving them with the lowest Win Rate at later iterations in both base models.
This suggests that with testbench reward or scalar PPA reward alone, reward hacking occurs where the model simply regenerates previously generated circuits, so that additional iterations fail to explore new Pareto solutions.

\subsection{Circuit Diversity Comparison}

\begin{figure}[t]
  \centering
  \begin{subfigure}[t]{0.495\linewidth}
    \centering
    \includegraphics[width=\linewidth]{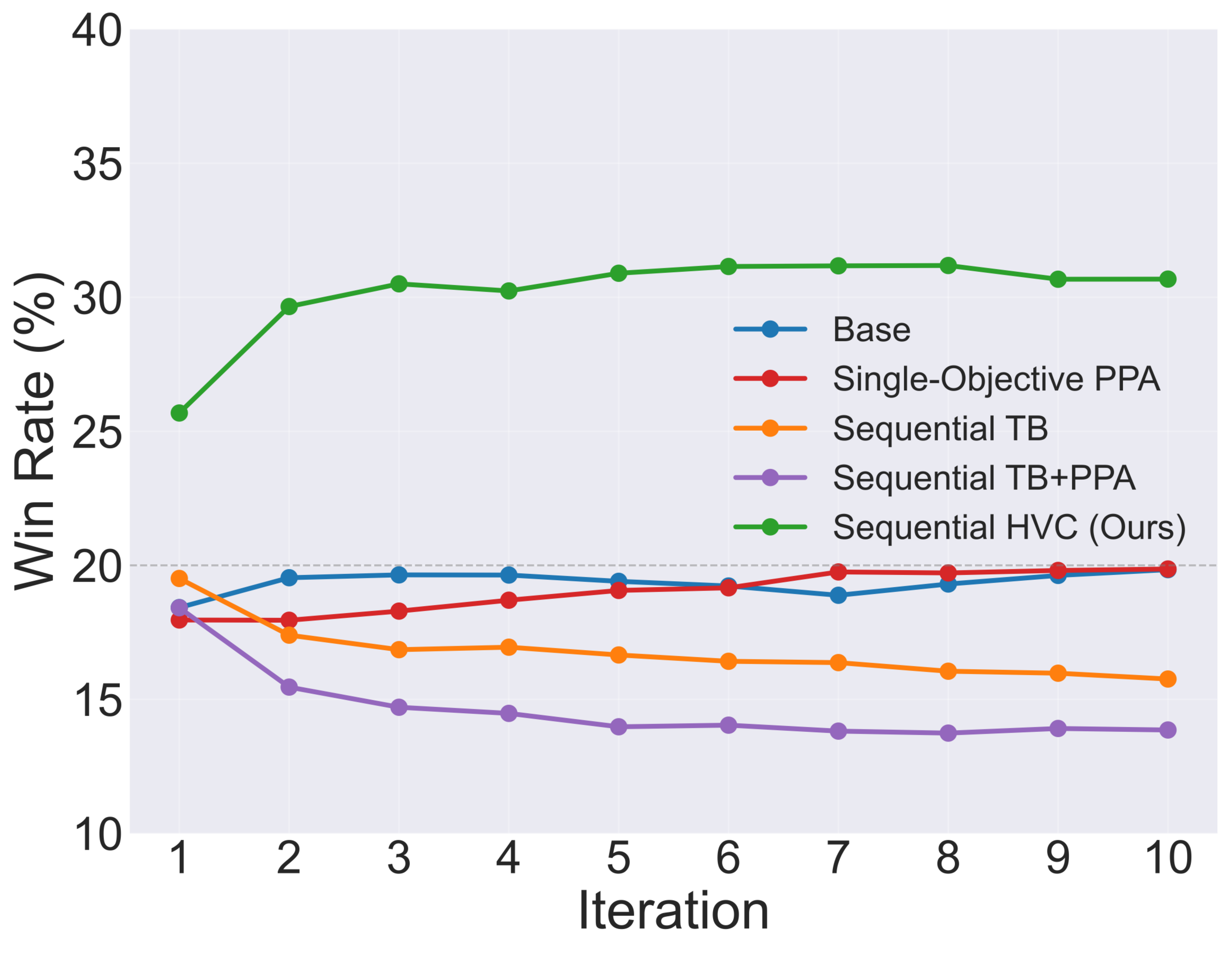}
    \caption{VeriReason-3B}
    \label{fig:win_rate_verireason}
  \end{subfigure}
  \hfill
  \begin{subfigure}[t]{0.465\linewidth}
    \centering
    \includegraphics[width=\linewidth]{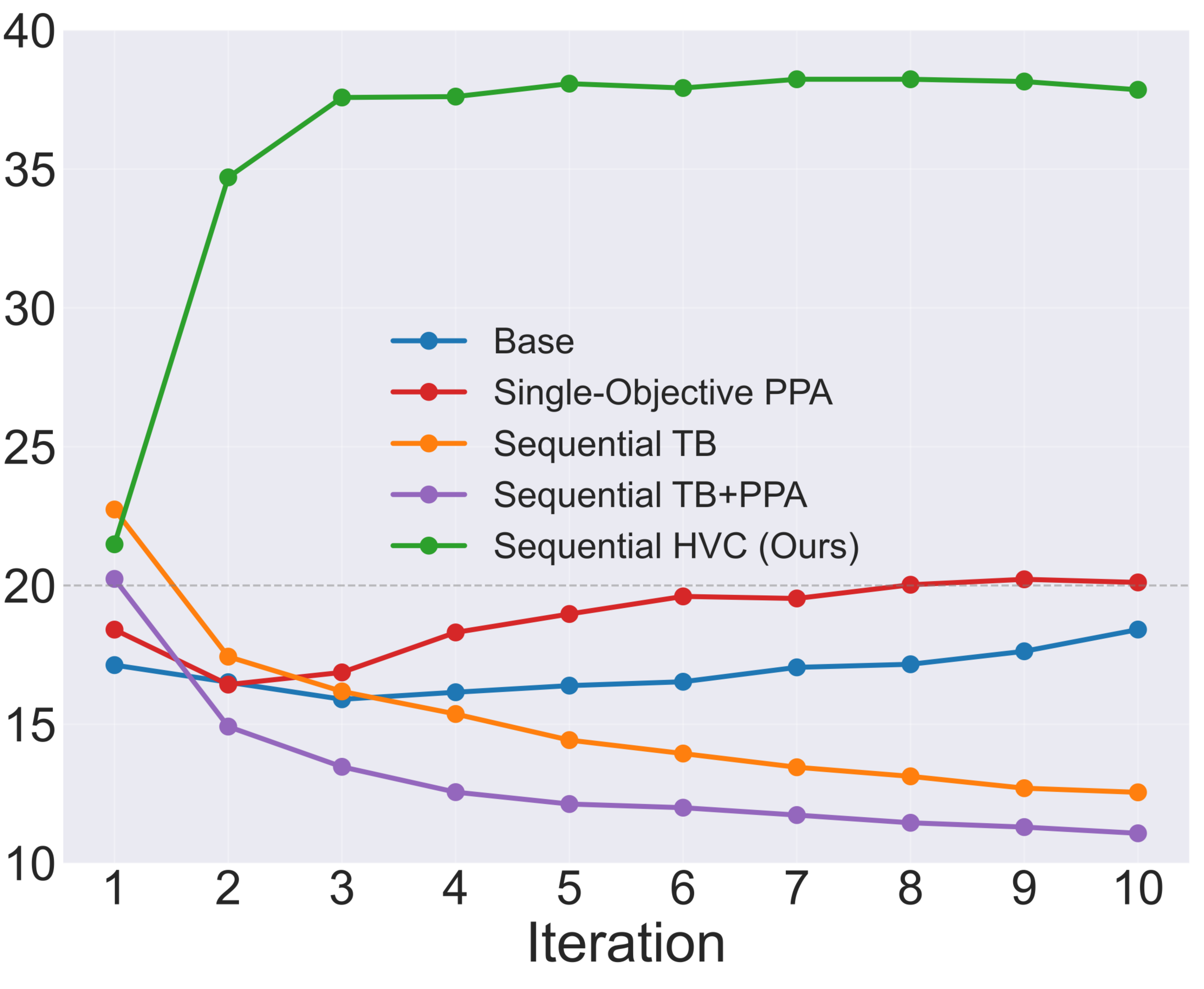}
    \caption{CodeV-R1-Distil-7B}
    \label{fig:win_rate_codev}
  \end{subfigure}
  \caption{Win Rate progression per generation count.}
  \label{fig:win_rate}
\end{figure}

\begin{figure}[t]
  \centering
  \begin{subfigure}[t]{0.404\linewidth}
    \centering
    \includegraphics[width=\linewidth]{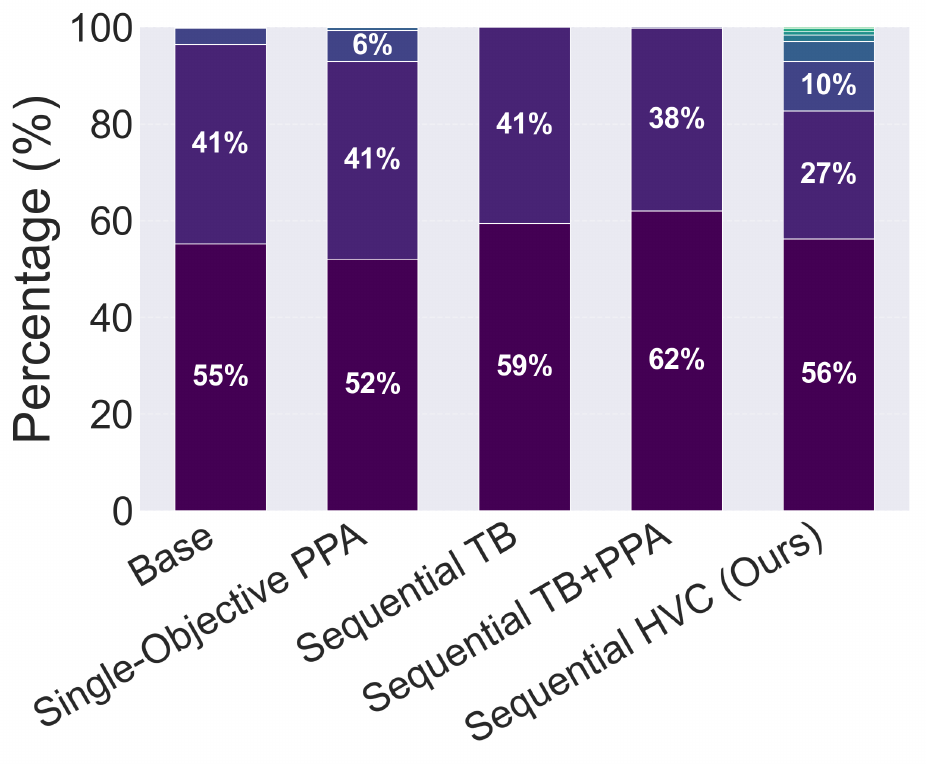}
    \caption{VeriReason-3B}
    \label{fig:circuit_diversity_verireason}
  \end{subfigure}
  \hfill
  \begin{subfigure}[t]{0.556\linewidth}
    \centering
    \includegraphics[width=\linewidth]{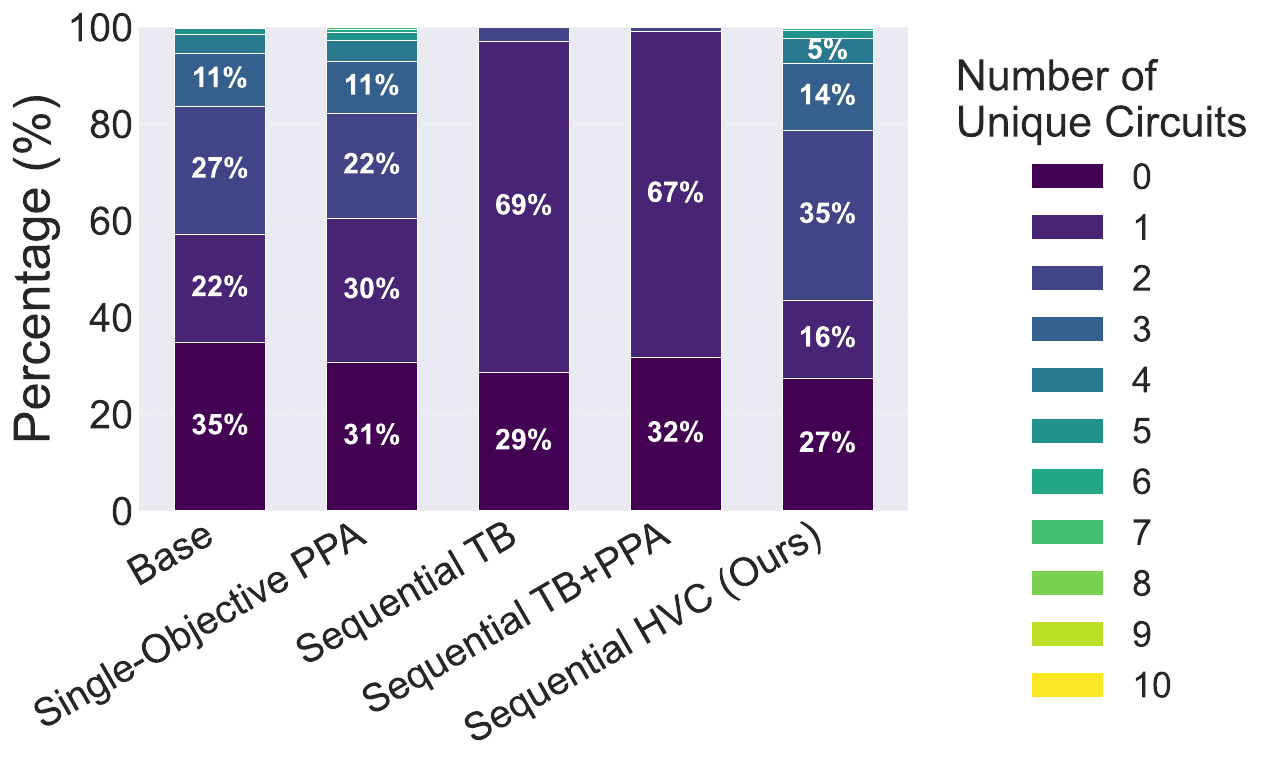}
    \caption{CodeV-R1-Distil-7B}
    \label{fig:circuit_diversity_codev}
  \end{subfigure}
  \caption{Circuit diversity comparison.}
  \label{fig:circuit_diversity}
\end{figure}

Figure~\ref{fig:circuit_diversity} shows the circuit diversity comparison.
The proposed method (Sequential HVC) was confirmed to generate the most diverse circuits in both base models.
Base and Single-Objective PPA show intermediate diversity, with their relative ordering depending on the base model.

On the other hand, Sequential TB and Sequential TB+PPA exhibit the lowest circuit diversity in both base models, frequently producing only a single unique circuit per trial.
This directly evidences the reward hacking observed in the Win Rate results in Section~\ref{sec:win_rate_comparison}: with testbench reward or scalar PPA reward alone, the model tends to regenerate previously generated circuits, suppressing exploration of new circuits.
This underscores that explicitly incorporating contributions to the Pareto front into the reward, as in HVC, is essential in the proposed generation-based pipeline.
To further confirm that the gains of the proposed method do not merely stem from avoiding duplicate generations, we conduct an ablation that adds an explicit repetition penalty to the sequential baseline in Appendix~\ref{appendix:rep_penalty}.

\section{Conclusion}

The experimental results of this work provide important insights for handling multi-objective optimization in reinforcement learning with verifiable rewards for LLMs.

First, multi-objective reinforcement learning with HVC reward (Sequential HVC) was shown to be effective for Pareto-optimal solution exploration compared to single-objective learning (Single-Objective PPA).
At iter=10, the proposed method outperformed Single-Objective PPA\ by 14.4 percentage points in rHV and also showed superiority in circuit diversity.
This result suggests that HVC reward can effectively learn trade-off structures that cannot be captured by conventional approaches that aggregate multiple objective functions into a single scalar value.

Second, it became clear that reward design is extremely important in the proposed generation-based training pipeline.
Sequential TB and Sequential TB+PPA yielded the lowest circuit diversity and, at later iterations, the lowest Win Rate across both base models, indicating that with testbench reward or scalar PPA reward alone, reward hacking occurred where the model simply regenerates previously generated circuits.
This indicates that to promote exploration of new solutions in generation-based training, contributions to the Pareto front must be explicitly incorporated into the reward.

Third, as generation count increases, the superiority of the proposed method expands, and performance improvement continues even for longer exploration (up to iter=10) than training (up to iter=5).
This generalization capability suggests that the proposed method acquires the essential ability of sequential Pareto solution exploration using circuit history, rather than simply overfitting to specific exploration lengths.

\paragraph{Limitations}
This work has several limitations.
First, this work targets only circuit design, and application to other engineering fields such as control engineering is unverified.
Second, this work verified only the setting where the objective function space dimension is 3 (PPA).
Since the proposed method requires computing Hypervolume for each rollout during training, addressing increased computational complexity when the objective function space dimension increases remains a challenge.
Third, due to LLM context length limitations, there is an upper limit on the length of circuit history provided as input.
Therefore, for problems with very many solutions on the Pareto front, it is not possible to provide all history as input, making it difficult to enumerate all solutions.
Fourth, limitations remain regarding data scale: the evaluation benchmark is limited to a single benchmark, RTLLM v2, and generalization to other benchmarks or more diverse circuit specifications remains unverified.
Additionally, the training data used for generation-based training (Sequential TB/Sequential TB+PPA/Sequential HVC) consists of only 78 problems, and verifying effectiveness on larger-scale datasets remains future work.

\section{Acknowledgements}
We sincerely thank Naoaki Okazaki and Youmi Ma of Institute of Science Tokyo for their valuable guidance, insightful discussions, and constructive feedback throughout this research. 
This work was supported by ABCI 3.0, which is provided by AIST and AIST Solutions.

\bibliographystyle{colm2026_conference}
\bibliography{colm2026_conference}

@article{shao2024deepseekmath,
  title   = {{DeepSeekMath}: {P}ushing the {L}imits of {M}athematical {R}easoning in {O}pen {L}anguage {M}odels},
  author  = {Shao, Zhihong and Wang, Peiyi and Zhu, Qihao and Xu, Runxin and Song, Junxiao and Bi, Xiao and Zhang, Haowei and Zhang, Mingchuan and Li, Y. K. and Wu, Y. and Guo, Daya},
  journal = {arXiv preprint arXiv:2402.03300},
  year    = {2024},
  doi     = {10.48550/arXiv.2402.03300}
}

@article{guo2025deepseekr1,
  title   = {{DeepSeek-R1} incentivizes reasoning in {LLMs} through reinforcement learning},
  author  = {Guo, Daya and Yang, Dejian and Zhang, Haowei and others},
  journal = {Nature},
  volume  = {645},
  number  = {8081},
  pages   = {633--638},
  year    = {2025},
  doi     = {10.1038/s41586-025-09422-z}
}

@techreport{openai2024o1,
  author       = {{OpenAI}},
  title        = {{OpenAI} o1 {S}ystem {C}ard},
  institution  = {OpenAI},
  year         = {2024},
  url          = {https://cdn.openai.com/o1-system-card-20241205.pdf},
}

@techreport{openai2025o3,
  author       = {{OpenAI}},
  title        = {{OpenAI} o3 and o4-mini {S}ystem {C}ard},
  institution  = {OpenAI},
  year         = {2025},
  url          = {https://cdn.openai.com/pdf/2221c875-02dc-4789-800b-e7758f3722c1/o3-and-o4-mini-system-card.pdf}
}

@article{yang2025qwen3,
  author       = {Yang, An and Li, Anfeng and Yang, Baosong and Zhang, Beichen and Hui, Binyuan and Zheng, Bo and Yu, Bowen and Gao, Chang and Huang, Chengen and Lv, Chenxu and others},
  title        = {{Qwen3} {T}echnical {R}eport},
  journal      = {arXiv preprint arXiv:2505.09388},
  year         = {2025},
  doi          = {10.48550/arXiv.2505.09388},
  url          = {https://arxiv.org/abs/2505.09388}
}

@article{teamolmo2025olmo3,
  author       = {Allyson Ettinger and Amanda Bertsch and Bailey Kuehl and David W. Graham and David Heineman and Dirk Groeneveld and Faeze Brahman and Finbarr Timbers and Hamish Ivison and others},
  title        = {{OLMo} 3},
  journal      = {arXiv preprint arXiv:2512.13961},
  year         = {2025},
  doi          = {10.48550/arXiv.2512.13961},
  url          = {https://arxiv.org/abs/2512.13961}
}

@inproceedings{wang2025verireason,
  title   = {{VeriReason}: {R}einforcement {L}earning with {T}estbench {F}eedback for {R}easoning-{E}nhanced {V}erilog {G}eneration},
  author  = {Wang, Yiting and Sun, Guoheng and Ye, Wanghao and Qu, Gang and Li, Ang},
  year    = {2026},
  booktitle = {GLSVLSI},
}

@article{chen2025chipseekr1,
  title   = {{ChipSeek-R1}: {G}enerating {H}uman-{S}urpassing {RTL} with {LLM} via {H}ierarchical {R}eward-{D}riven {R}einforcement {L}earning},
  author  = {Chen, Zhirong and Chang, Kaiyan and Li, Zhuolin and He, Xinyang and Chen, Chujie and Li, Cangyuan and Wang, Mengdi and Xu, Haobo and Han, Yinhe and Wang, Ying},
  journal = {arXiv preprint arXiv:2507.04736},
  year    = {2025},
  doi     = {10.48550/arXiv.2507.04736}
}

@inproceedings{hsiao2025buffalo,
  author={Hsiao, Hao-Hsiang and Lu, Yi-Chen and Lim, Sung Kyu and Ren, Haoxing},
  booktitle={2025 IEEE/ACM International Conference On Computer Aided Design (ICCAD)}, 
  title={{BUFFALO}: {PPA}-{C}onfigurable, {LLM}-based {B}uffer {T}ree {G}eneration via {G}roup {R}elative {P}olicy {O}ptimization}, 
  year={2025},
  volume={},
  number={},
  pages={1-9},
  doi={10.1109/ICCAD66269.2025.11240744}}

@inproceedings{liu2024openllmrtl,
  title        = {{OpenLLM-RTL}: {O}pen {D}ataset and {B}enchmark for {LLM}-{A}ided {D}esign {RTL} {G}eneration ({I}nvited)},
  author       = {Liu, Shang and Lu, Yao and Fang, Wenji and Li, Mengming and Xie, Zhiyao},
  booktitle    = {Proceedings of the 43rd IEEE/ACM International Conference on Computer-Aided Design (ICCAD)},
  pages        = {1--9},
  year         = {2024},
  organization = {ACM},
  doi          = {10.1145/3676536.3697118}
}

@inproceedings{teoh2025generalization,
  title     = {On {G}eneralization {A}cross {E}nvironments {I}n {M}ulti-{O}bjective {R}einforcement {L}earning},
  author    = {Teoh, Jayden and Varakantham, Pradeep and Vamplew, Peter},
  booktitle = {The Thirteenth International Conference on Learning Representations},
  year      = {2025},
  note      = {ICLR 2025 Poster}
}

@inproceedings{zhu2025qimengcodevr1,
  title     = {{QiMeng-CodeV-R1}: {R}easoning-{E}nhanced {V}erilog {G}eneration},
  author    = {Zhu, Yaoyu and Huang, Di and Lyu, Hanqi and Zhang, Xiaoyun and Li, Chongxiao and Shi, Wenxuan and Wu, Yutong and Mu, Jianan and Wang, Jinghua and Zhao, Yang and others},
  booktitle = {NeurIPS},
  year      = {2026}
}

@article{IcarusVerilog,
  title   = {{Icarus} {Verilog}: {O}pen-{S}ource {V}erilog {M}ore {T}han a {Y}ear {L}ater},
  author  = {Williams, Stephen and Baxter, Michael},
  journal = {Linux Journal},
  volume  = {2002},
  number  = {99},
  pages   = {3},
  year    = {2002}
}

@inproceedings{OpenROAD,
  title     = {{OpenROAD}: {T}oward a {S}elf-{D}riving, {O}pen-{S}ource {D}igital {L}ayout {I}mplementation {T}ool {C}hain},
  author    = {Ajayi, T. and Blaauw, D. and Chan, T.-B. and Cheng, C.-K. and Chhabria, V. A. and Choo, D. K. and Coltella, M. and Dobre, S. and Dreslinski, R. and Foga{\c{c}}a, M. and others},
  booktitle = {Proceedings of Government Microcircuit Applications and Critical Technology Conference},
  pages     = {1105--1110},
  year      = {2019}
}

@inproceedings{DeepRTL2,
  title     = {{DeepRTL2}: {A} {V}ersatile {M}odel for {RTL}-{R}elated {T}asks},
  author    = {Liu, Yi and Zhang, Hongji and Zhou, Yunhao and Shi, Zhengyuan and Xu, Changran and Xu, Qiang},
  booktitle = {Findings of the Association for Computational Linguistics: ACL 2025},
  pages     = {6485--6500},
  year      = {2025},
  address   = {Vienna, Austria},
  publisher = {Association for Computational Linguistics},
  doi       = {10.18653/v1/2025.findings-acl.336},
}

@article{Dobes2013,
  title   = {Multiobjective {O}ptimization for {E}lectronic {C}ircuit {D}esign in {T}ime and {F}requency {D}omains},
  author  = {Dobe{\v{s}}, Josef and Michal, Jan and Biolkov{\'a}, Viera},
  journal = {Radioengineering},
  volume  = {22},
  number  = {1},
  pages   = {136--152},
  year    = {2013}
}

@article{Visan2022,
  title   = {Automated {C}ircuit {S}izing with {M}ulti-objective {O}ptimization based on {D}ifferential {E}volution and {B}ayesian {I}nference},
  author  = {Vişan, C{\u{a}}t{\u{a}}lin and Pascu, Octavian and St{\u{a}}nescu, Marius and {\c{S}}andru, Elena-Diana and Diaconu, Cristian and Buzo, Andi and Pelz, Georg and Cucu, Horia},
  journal = {Knowledge-Based Systems},
  volume  = {258},
  pages   = {109987},
  year    = {2022},
  doi     = {10.1016/j.knosys.2022.109987}
}

@inproceedings{Liu2013,
  author={Liu, Hung-Yi and Carloni, Luca P.},
  booktitle={2013 50th ACM/EDAC/IEEE Design Automation Conference (DAC)}, 
  title={On learning-based methods for design-space exploration with {H}igh-{L}evel {S}ynthesis}, 
  year={2013},
  volume={},
  number={},
  pages={1-7},
  doi={}}

@inproceedings{Bai2023,
  title     = {Towards a {C}omprehensive {B}enchmark for {H}igh-{L}evel {S}ynthesis {T}argeted to {FPGAs}},
  author    = {Bai, Yunsheng and Sohrabizadeh, Atefeh and Qin, Zongyue and Hu, Ziniu and Sun, Yizhou and Cong, Jason},
  booktitle = {Advances in Neural Information Processing Systems},
  volume    = {36},
  pages     = {45288--45299},
  year      = {2023}
}

@article{AlphaEvolve,
  title   = {{AlphaEvolve}: {A} {C}oding {A}gent for {S}cientific and {A}lgorithmic {D}iscovery},
  author  = {Novikov, Alexander and V{\~u}, Ng{\^a}n and Eisenberger, Marvin and Dupont, Emilien and Huang, Po-Sen and Wagner, Adam Zsolt and Shirobokov, Sergey and Kozlovskii, Borislav and Ruiz, Francisco J. R. and Mehrabian, Abbas and others},
  journal = {arXiv preprint arXiv:2506.13131},
  year    = {2025},
  doi     = {10.48550/arXiv.2506.13131}
}

@article{verilog_history,
author = {Flake, Peter and Moorby, Phil and Golson, Steve and Salz, Arturo and Davidmann, Simon},
title = {{Verilog} {HDL} and its ancestors and descendants},
year = {2020},
issue_date = {June 2020},
publisher = {Association for Computing Machinery (ACM)},
address = {New York, NY, USA},
volume = {4},
number = {HOPL},
doi = {10.1145/3386337},
journal = {Proceedings of the ACM on Programming Languages},
month = jun,
articleno = {87},
numpages = {1-90}
}

@inproceedings{vhdl_overview,
  author={Shahdad, M.},
  booktitle={23rd ACM/IEEE Design Automation Conference}, 
  title={An {O}verview of {VHDL} {L}anguage and {T}echnology}, 
  year={1986},
  volume={},
  number={},
  pages={320-326},
  doi={10.1109/DAC.1986.1586107}}

@inproceedings{WolfGlaser2013Yosys,
author    = {Clifford Wolf and Johann Glaser},
title     = {{Yosys} - {A} {F}ree {V}erilog {S}ynthesis {S}uite},
booktitle = {Proceedings of Austrochip 2013},
year      = {2013}
}

@article{Fortin2012DEAP,
  author  = {F{\'e}lix-Antoine Fortin and Fran{\c c}ois-Michel De Rainville and
             Marc-Andr{\'e} Gardner and Marc Parizeau and Christian Gagn{\'e}},
  title   = {DEAP: Evolutionary Algorithms Made Easy},
  journal = {Journal of Machine Learning Research},
  volume  = {13},
  pages   = {2171--2175},
  year    = {2012}
}

@inproceedings{
  yue2025does,
  title     = {Does Reinforcement Learning Really Incentivize Reasoning Capacity in {LLM}s Beyond the Base Model?},
  author    = {Yang Yue and Zhiqi Chen and Rui Lu and Andrew Zhao and Zhaokai Wang and Shiji Song and Gao Huang},
  booktitle = {NeurIPS},
  year      = {2025},
}

\appendix
\section{Detailed Definition of rHV}
\label{sec:rhv_details}

This section provides the detailed mathematical definition of relative Hypervolume (rHV) and describes the properties of rHV when an objective function degenerates.

\subsection{Mathematical Definition}

Throughout this section, a \emph{solution} $x$ denotes a single generated circuit.
Each solution $x$ is associated with the three-dimensional objective vector $\mathbf{o}(x) = (o_P(x), o_D(x), o_A(x))$ defined in Equation~\ref{eq:objective}, whose components are the golden-normalized power, delay, and area objectives.
We write $f_{i,k}(x)$ for the $k$-th component of this vector evaluated on problem $i$ ($k \in \{1, 2, 3\}$); all three objectives are \emph{maximized}, i.e., larger is better.

Let $x^{(m,i,j,l)}_1, x^{(m,i,j,l)}_2, \dots$ denote the circuits generated by method $m$ on problem $i$ in trial $j$ at iteration $l$.
The cumulative solution set $S_{m,i,j}^{(\ell)}$ collects all circuits generated by method $m$ on problem $i$ in trial $j$ from the first iteration through iteration $\ell$:
\begin{equation}
  S_{m,i,j}^{(\ell)} = \bigcup_{l=1}^{\ell} \left\{ x^{(m,i,j,l)}_1, x^{(m,i,j,l)}_2, \dots \right\}
\end{equation}
This is the per-trial solution set whose Hypervolume the numerator of rHV measures.

To obtain a common reference for problem $i$, we take the union $U_i$ of the cumulative solution sets over all methods $m \in \mathcal{M}$ and all $n_t$ trials at the final iteration $n_{\mathrm{iter}}$:
\begin{equation}
  U_i = \bigcup_{m \in \mathcal{M}} \bigcup_{j=1}^{n_t} S_{m,i,j}^{(n_{\mathrm{iter}})}
\end{equation}

The reference archive $\Pi_i$ is the non-dominated set of this union, i.e., the pseudo Pareto front used as the denominator of rHV:
\begin{equation}
  \Pi_i = \mathrm{ND}(U_i)
\end{equation}

Because the raw PPA values differ in scale across objectives and problems, each objective $k \in \{1, 2, 3\}$ is min--max normalized using the extreme values observed in $U_i$:
\begin{equation}
  \tilde{f}_{i,k}(x) = \frac{f_{i,k}(x) - q^{\mathrm{low}}_{i,k}}{q^{\mathrm{high}}_{i,k} - q^{\mathrm{low}}_{i,k} + \varepsilon}
\end{equation}
where $q^{\mathrm{low}}_{i,k} = \min_{x \in U_i} f_{i,k}(x)$ and $q^{\mathrm{high}}_{i,k} = \max_{x \in U_i} f_{i,k}(x)$ are the per-objective minimum and maximum over the union, and $\varepsilon = 10^{-20}$ avoids division by zero.
For any solution set $Z$, we write $\tilde{Z}$ for the set obtained by mapping every solution $x \in Z$ to its normalized objective vector $(\tilde{f}_{i,1}(x), \tilde{f}_{i,2}(x), \tilde{f}_{i,3}(x))$.
In particular, $\tilde{S}_{m,i,j}^{(\ell)}$ and $\tilde{\Pi}_i$ are the normalized per-trial solution set and the normalized reference archive, respectively.

rHV is then the Hypervolume of the normalized per-trial solution set divided by that of the normalized reference archive:
\begin{equation}
  \mathrm{rHV}_{m,i,j}^{(\ell)} = \frac{\mathrm{HV}\left( \tilde{S}_{m,i,j}^{(\ell)}; \mathbf{r} \right)}{\mathrm{HV}\left( \tilde{\Pi}_i; \mathbf{r} \right)}
\end{equation}
where the reference point is $\mathbf{r} = (-\delta, -\delta, -\delta)$ with $\delta = 10^{-12}$.
The reference point is placed slightly below the origin, rather than at the origin itself, so that rHV remains well-defined and reduces to a 2-dimensional relative Hypervolume when an objective degenerates (Section~\ref{subsec:rhv_degenerate}).

\subsection{Property of rHV When an Objective Degenerates}\label{subsec:rhv_degenerate}

\begin{proposition}
Consider a 3-objective maximization problem with the normalized reference point
\[
\mathbf r=(-\delta,-\delta,-\delta),\qquad \delta>0.
\]
Assume that the third objective is constant across all solutions, and after normalization:
\[
\tilde f_{i,3}(x)=0.
\]
Then, for any point set
\[
P \subset [0,1]^2\times\{0\},
\]
the following holds:
\[
\mathrm{HV}\!\left(P;(-\delta,-\delta,-\delta)\right)
=
\delta\,
\mathrm{HV}\!\left(P_{12};(-\delta,-\delta)\right),
\]
where
\[
P_{12}=\{(u,v)\mid (u,v,0)\in P\}
\]
is the projection onto the first and second objectives.
\end{proposition}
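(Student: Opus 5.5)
The plan is to write the Hypervolume as the Lebesgue measure of the dominated region and factor that region as a Cartesian product. For a finite point set $Q\subset\mathbb{R}^d$ and a reference point $\mathbf r$, the definition is
\[
\mathrm{HV}(Q;\mathbf r)=\lambda_d\Bigl(\bigcup_{q\in Q}[\mathbf r,q]\Bigr),
\]
where $[\mathbf r,q]=\prod_k[r_k,q_k]$ is the axis-aligned box spanned by $\mathbf r$ and $q$, and $\lambda_d$ is $d$-dimensional Lebesgue measure. Every point $q$ of $P$ satisfies $q\ge \mathbf r$ componentwise, since its coordinates lie in $[0,1]$ and $-\delta<0$. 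So every box is nondegenerate in each coordinate, and no point has to be discarded for failing to dominate the reference point.

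\textbf{Step 1 (factorization of boxes).} For $q=(u,v,0)\in P$, the box is
\[
[\mathbf r,q]=[-\delta,u]\times[-\delta,v]\times[-\delta,0]=B_{(u,v)}\times[-\delta,0],
\]
where $B_{(u,v)}=[(-\delta,-\delta),(u,v)]$ is the 2D box for $(u,v)\in P_{12}$. The third factor is the same for all points, because the third coordinate is identically $0$ by the hypothesis $\tilde f_{i,3}(x)=0$.

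\textbf{Step 2 (union commutes with the common factor).} It follows that
\[
\bigcup_{q\in P}[\mathbf r,q]=\Bigl(\bigcup_{(u,v)\in P_{12}}B_{(u,v)}\Bigr)\times[-\delta,0].
\]
The map $q\mapsto(u,v)$ is a bijection from $P$ onto $P_{12}$, so the index sets match exactly.

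\textbf{Step 3 (product measure).} Lebesgue measure on $\mathbb{R}^3$ is the product of $\lambda_2$ and $\lambda_1$, and $\lambda_1([-\delta,0])=\delta$. Therefore the measure of the region in Step 2 is $\delta\,\mathrm{HV}(P_{12};(-\delta,-\delta))$, which is the claim. If $P$ is empty, both sides are $0$.

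There is no real obstacle here; the argument is a direct measure computation. The one point to handle carefully is the convention for the Hypervolume definition. In particular, the implementation of \cite{Fortin2012DEAP} is a minimization routine, applied to negated objectives. I will state explicitly that $\mathrm{HV}$ is taken to mean the measure of the union of boxes between $\mathbf r$ and the points, so that both the 3D and 2D quantities refer to the same definition.

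As a remark, the resulting formula gives a factor $\delta$ in both the numerator and the denominator of rHV. So $\mathrm{rHV}$ reduces to the ratio of the 2D Hypervolumes whenever the denominator is nonzero, and with the shifted reference point the denominator is always at least $\delta^3>0$ for a nonempty archive.
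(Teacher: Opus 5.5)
Your proposal is correct and is essentially the paper's proof: both write $\mathrm{HV}$ as the Lebesgue measure of the union of boxes $[-\delta,u]\times[-\delta,v]\times[-\delta,0]$, pull the common factor $[-\delta,0]$ out of the union, and use the product property of Lebesgue measure to obtain the factor $\delta$. Your extra checks (every point dominates $\mathbf r$, the empty case, and the positivity of the rHV denominator) are sound and make explicit details that the paper leaves implicit.
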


\begin{proof}
By the definition of Hypervolume for maximization problems:
\[
\mathrm{HV}(P;\mathbf r)
=
\lambda_3\!\left(
\bigcup_{p\in P}
[r_1,p_1]\times[r_2,p_2]\times[r_3,p_3]
\right).
\]

Since every point has the form $(u,v,0)\in P$, each point spans a hyperrectangle:
\[
[-\delta,u]\times[-\delta,v]\times[-\delta,0].
\]
Therefore:
\[
\bigcup_{(u,v,0)\in P}
[-\delta,u]\times[-\delta,v]\times[-\delta,0]
=
\left(
\bigcup_{(u,v)\in P_{12}}
[-\delta,u]\times[-\delta,v]
\right)\times[-\delta,0].
\]
By the product property of volume:
\begin{align}
\mathrm{HV}\!\left(P;(-\delta,-\delta,-\delta)\right)
&=
\lambda_2\!\left(
\bigcup_{(u,v)\in P_{12}}
[-\delta,u]\times[-\delta,v]
\right)\cdot\delta \\
&=
\delta\,
\mathrm{HV}\!\left(P_{12};(-\delta,-\delta)\right).
\end{align}
\end{proof}

Applying this result to both the numerator and denominator:
\[
\mathrm{HV}\!\left(\tilde S_{m,i,j}^{(\ell)};\mathbf r\right)
=
\delta\,
\mathrm{HV}\!\left((\tilde S_{m,i,j}^{(\ell)})_{12};(-\delta,-\delta)\right),
\]
\[
\mathrm{HV}\!\left(\tilde \Pi_i;\mathbf r\right)
=
\delta\,
\mathrm{HV}\!\left((\tilde \Pi_i)_{12};(-\delta,-\delta)\right).
\]
Therefore:
\[
\mathrm{rHV}_{m,i,j}^{(\ell)}
=
\frac{
\mathrm{HV}\!\left((\tilde S_{m,i,j}^{(\ell)})_{12};(-\delta,-\delta)\right)
}{
\mathrm{HV}\!\left((\tilde \Pi_i)_{12};(-\delta,-\delta)\right)
}.
\]
Therefore, when the third objective takes the same value across all solutions, the third objective appears only as a common factor in both numerator and denominator, and rHV effectively reduces to a 2-dimensional relative Hypervolume based on the first and second objectives.

\section{PPA Reward}\label{appendix:ppa_reward}

\begin{equation}
  r_{\text{PPA}} = \frac{P_{\text{golden}}}{P_{\text{gen}}} \times \frac{A_{\text{golden}}}{A_{\text{gen}}} \times \frac{D_{\text{golden}}}{D_{\text{gen}} + \epsilon}
\end{equation}
where $P$, $A$, and $D$ denote power, area, and delay, respectively.
The subscript ``golden'' refers to the ground-truth circuit and ``gen'' refers to the generated circuit.
$\epsilon$ is a small constant to avoid division by zero.

\section{Prompt Templates}\label{appendix:prompt}

We show the prompt templates used for multi-objective reinforcement learning below.
\ph{spec} represents the circuit specification, \ph{circuit} represents the generated Verilog circuit, and \ph{ppa} represents PPA values as placeholders.

\subsection{System Prompt}

\begin{promptbox}
You are a helpful assistant. The assistant first thinks about
the reasoning process in the mind and then provides the user
with the answer. The reasoning process and answer are enclosed
within <think> </think> and <answer> </answer> tags,
respectively.

Now the user asks you to write verilog code as shown in <Spec>.
After thinking, enclose the final verilog code in ```verilog```
within <answer> </answer> tags.

Note that circuits satisfying the same specification have
already been obtained, and you need to derive a Pareto-optimal
solution with different PPA values from those circuits. The
circuits obtained so far and their PPA values are listed below
in <# Circuit history>.
\end{promptbox}

\subsection{User Prompt (iteration = 1)}

\begin{promptbox}
## Spec
{spec}

## Circuit history
No previous circuit.
\end{promptbox}

\subsection{User Prompt (iteration $\geq$ 2)}

\begin{promptbox}
## Spec
{spec}

## Circuit history
### previous circuit 1
```verilog
{circuit_1}
```
- PPA: {ppa_1}

### previous circuit 2
```verilog
{circuit_2}
```
- PPA: {ppa_2}

...

### previous circuit i
```verilog
{circuit_i}
```
- PPA: {ppa_i}
\end{promptbox}

\section{Hyperparameters}\label{appendix:hyperparameters}

Table~\ref{tab:hyperparameters} reports the hyperparameters used for training.

\begin{table}[h]
\centering
\caption{Training hyperparameters. Common GRPO settings are shared across all methods on both base models; Max Steps differs by base model.}
\label{tab:hyperparameters}
\begin{tabular}{lcc}
\toprule
\textbf{Hyperparameter} & \textbf{VeriReason-3B} & \textbf{CodeV-R1-Distil-7B} \\
\midrule
Num. GPUs & 4 & 4 \\
GPU & H200 & H200 \\
Batch Size (per device) & 2 & 2 \\
Gradient Accumulation Steps & 4 & 4 \\
Learning Rate & $1.0 \times 10^{-6}$ & $1.0 \times 10^{-6}$ \\
Max Steps & 15,000 & 1,500 \\
Number of Generations & 8 & 8 \\
Temperature & 0.9 & 0.9 \\
KL Penalty ($\beta$) & 0.04 & 0.04 \\
Max Prompt Length & 1,024 & 1,024 \\
Max Completion Length & 8,192 & 8,192 \\
Max History Length per Sample$^{\dagger}$ & 5 & 5 \\
\bottomrule
\end{tabular}
\par\vspace{2pt}
\footnotesize{$^{\dagger}$Applies only to generation-based training methods (Sequential TB / Sequential TB+PPA / Sequential HVC); not used for Single-Objective PPA.}
\end{table}

\section{Additional Analysis}
\label{sec:additional_results}

\subsection{Bootstrap Confidence Intervals}\label{appendix:bootstrap_ci}

To assess statistical reliability, we computed 95\% confidence intervals for the rHV differences between Sequential HVC (Ours) and each baseline at iter=10 using a nested paired bootstrap.
Specifically, we resampled the 45 problems with replacement and, within each sampled problem, resampled the 30 trials with replacement while preserving paired trial comparisons, and repeated this procedure 10{,}000 times.
Table~\ref{tab:rhv_ci} reports bootstrap confidence intervals (95\%) of the rHV difference between Sequential HVC (Ours) and each baseline at iter=10.

\begin{table}[t]
\centering
\caption{Bootstrap confidence intervals (95\%) of the rHV difference between Sequential HVC (Ours) and each baseline at iter=10.}
\label{tab:rhv_ci}
\begin{tabular}{llc}
\toprule
\textbf{Base Model} & \textbf{Comparison} & \textbf{rHV Diff (95\% CI)} \\
\midrule
\multirow{4}{*}{VeriReason-3B} & Ours $-$ Base & $+13.3\ [+2.1,\ +24.8]$ \\
& Ours $-$ Single-Objective PPA & $+14.5\ [+2.5,\ +27.0]$ \\
& Ours $-$ Sequential TB & $+13.2\ [+4.7,\ +22.7]$ \\
& Ours $-$ Sequential TB+PPA & $+10.8\ [+1.2,\ +20.9]$ \\
\midrule
\multirow{4}{*}{CodeV-R1-Distil-7B} & Ours $-$ Base & $+4.8\ [+0.6,\ +9.0]$ \\
& Ours $-$ Single-Objective PPA & $+5.4\ [+1.4,\ +9.9]$ \\
& Ours $-$ Sequential TB & $+9.7\ [+3.5,\ +16.6]$ \\
& Ours $-$ Sequential TB+PPA & $+11.2\ [+4.9,\ +18.3]$ \\
\bottomrule
\end{tabular}
\end{table}

For both VeriReason-3B and CodeV-R1-Distil-7B, the 95\% confidence intervals of the improvements over all baselines do not cross zero, indicating that the gains are stable even when accounting for both problem-level and trial-level variation.

\subsection{Relative Hypervolume with Respect to the Baseline-Only Archive}\label{appendix:rhv_baseline_only}

To quantify the effect of including our own outputs in the reference archive, we conduct an out-of-self reference analysis: we remove all outputs of Sequential HVC (Ours) across all trials and rebuild the reference archive using only the four baseline methods (Base, Single-Objective PPA, Sequential TB, Sequential TB+PPA), and evaluate rHV with respect to this baseline-only archive.
However, for this baseline-only reference archive, the hypervolume of the reference archive becomes extremely small for some problems, so the mean of this ratio-based rHV is dominated by a few outliers.
We therefore compare methods using the median rHV instead of the mean for this additional analysis.
Table~\ref{tab:rhv_baseline_only} reports the median relative Hypervolume (rHV) computed with respect to this baseline-only archive.

\begin{table}[t]
\centering
\caption{Median relative Hypervolume (rHV) computed with respect to the baseline-only archive. Each column shows the median rHV after a given number of sequential generation iterations (\textit{iter}).}
\label{tab:rhv_baseline_only}
\begin{tabular}{llccc}
\toprule
\textbf{Base Model} & \textbf{Method} & \textbf{rHV (iter=1)} & \textbf{rHV (iter=5)} & \textbf{rHV (iter=10)} \\
\midrule
\multirow{5}{*}{VeriReason-3B} & Base & 2.2\% & 6.4\% & 10.3\% \\
& Single-Objective PPA & 0.9\% & 5.3\% & 10.3\% \\
& Sequential TB & 0.0\% & 0.2\% & 0.3\% \\
& Sequential TB+PPA & 0.1\% & 0.9\% & 3.4\% \\
& Sequential HVC (Ours) & \textbf{3.0\%} & \textbf{15.1\%} & \textbf{19.4\%} \\
\midrule
\multirow{5}{*}{CodeV-R1-Distil-7B} & Base & 11.5\% & 23.9\% & 32.1\% \\
& Single-Objective PPA & 14.4\% & 29.2\% & 36.1\% \\
& Sequential TB & 19.8\% & 33.0\% & 33.0\% \\
& Sequential TB+PPA & 17.7\% & 27.1\% & 29.8\% \\
& Sequential HVC (Ours) & \textbf{24.5\%} & \textbf{36.5\%} & \textbf{41.8\%} \\
\bottomrule
\end{tabular}
\end{table}

Sequential HVC (Ours) still achieves the highest median rHV with respect to the baseline-only archive for both the VeriReason-3B and CodeV-R1-Distil-7B bases, confirming that the original conclusion is not merely an artifact of including Sequential HVC in the reference archive.

\subsection{Objective-Space Coverage}\label{appendix:scatter_plot}

To gain a more intuitive understanding of how each method covers the Pareto front, we visualize the solutions generated by each model at test time as two-dimensional projections onto pairs of objective axes.
Figure~\ref{fig:scatter_plot} shows the resulting scatter plots for each method.
Because all three objectives are normalized so that larger values are better (Equation~\ref{eq:objective}), solutions lying toward the upper-right region of each projection are more desirable; a method is therefore preferable when its solutions are both widely spread and pushed toward this region.
The visualization confirms that the proposed method (Sequential HVC) distributes diverse solutions across the desirable upper-right region, complementing the aggregated rHV results reported in the main text.

\begin{figure}[t]
  \centering
  \begin{subfigure}[t]{\linewidth}
    \centering
    \includegraphics[width=0.95\linewidth]{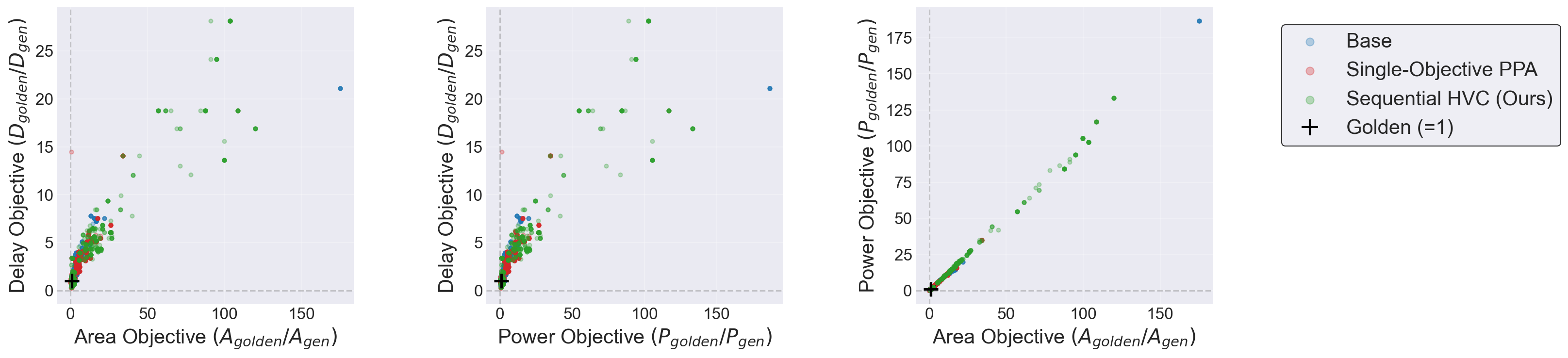}
    \caption{VeriReason-3B}
    \label{fig:scatter_plot_verireason}
  \end{subfigure}
  \\[1em]
  \begin{subfigure}[t]{\linewidth}
    \centering
    \includegraphics[width=0.95\linewidth]{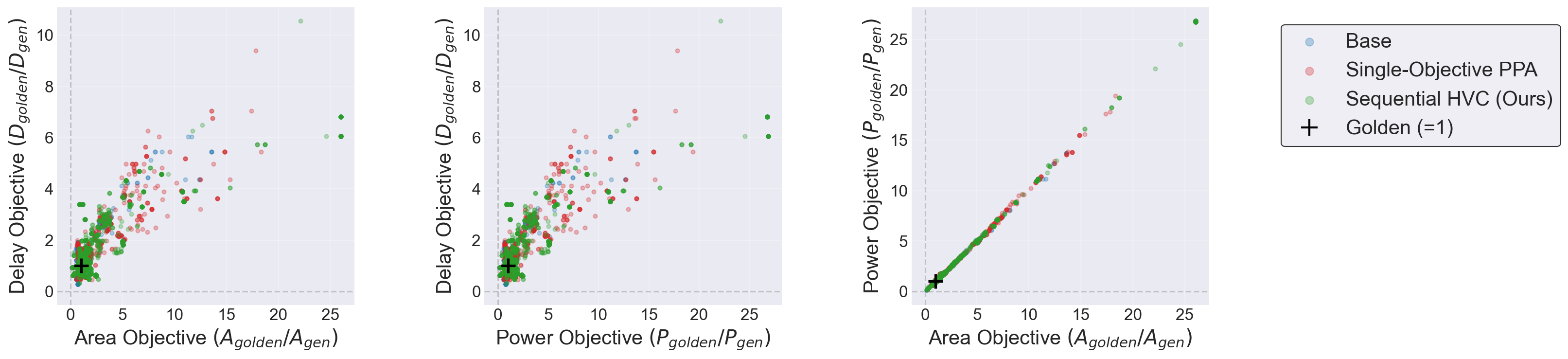}
    \caption{CodeV-R1-Distil-7B}
    \label{fig:scatter_plot_codevr1}
  \end{subfigure}
  \caption{Scatter plots of the generated solutions in the objective space for each base model. Visualizing the solutions in the objective space clarifies how each method covers different trade-off regions of the Pareto front. Sequential HVC (Ours) attains a broader coverage of the objective space than the baselines.}
  \label{fig:scatter_plot}
\end{figure}

\subsection{Repetition Penalty Ablation}\label{appendix:rep_penalty}

The Sequential HVC reward encourages the model to generate circuits that contribute to the Pareto front, which also has the side effect of discouraging duplicate generations.
It is therefore not obvious whether the gains of Sequential HVC come from directly optimizing the Pareto-front contribution or simply from learning to avoid repeated generations and produce unique circuits.
To disentangle these two effects, we run an ablation that adds an explicit repetition penalty to the sequential baseline and test whether avoiding duplicates alone yields a comparable improvement.
Figure~\ref{fig:rep_penalty} reports the Win Rate and circuit diversity of the sequential baseline with and without the repetition penalty.
While the repetition penalty substantially suppresses the generation of duplicate circuits, it still underperforms Sequential HVC in terms of both rHV and Win Rate.
This indicates that avoiding repetition alone is not sufficient for effective Pareto-front exploration, and that directly optimizing the Pareto-front contribution through the HVC reward is crucial.

\begin{figure}[t]
  \centering
  \begin{subfigure}[t]{\linewidth}
    \centering
    \includegraphics[width=\linewidth]{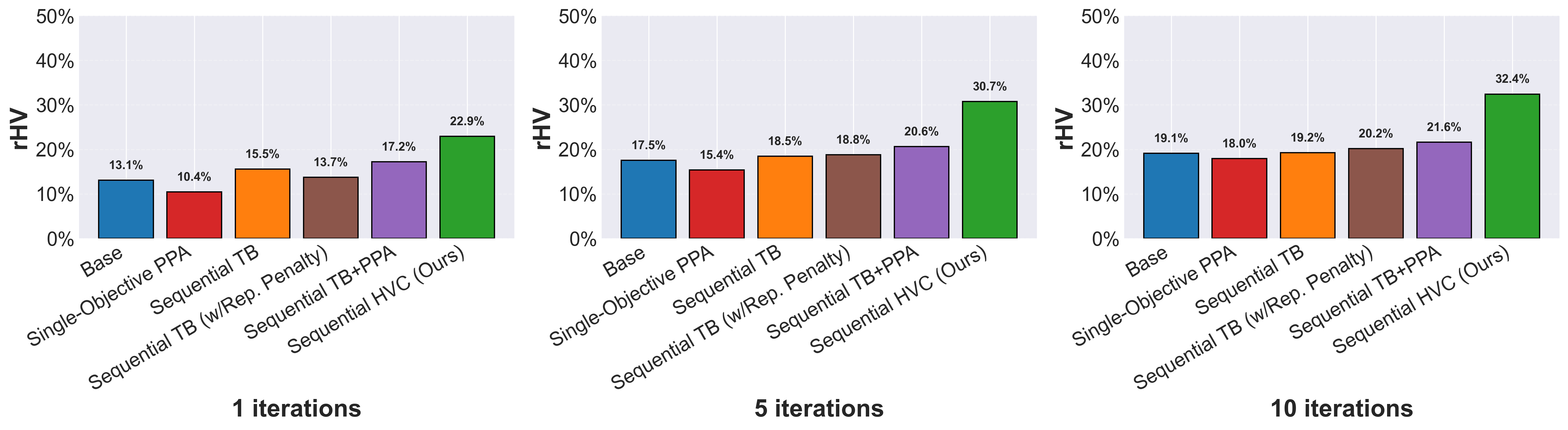}
    \caption{Relative Hypervolume (rHV)}
    \label{fig:rep_penalty_rhv}
  \end{subfigure}
  \\[1em]
  \begin{subfigure}[t]{0.48\linewidth}
    \centering
    \includegraphics[width=\linewidth]{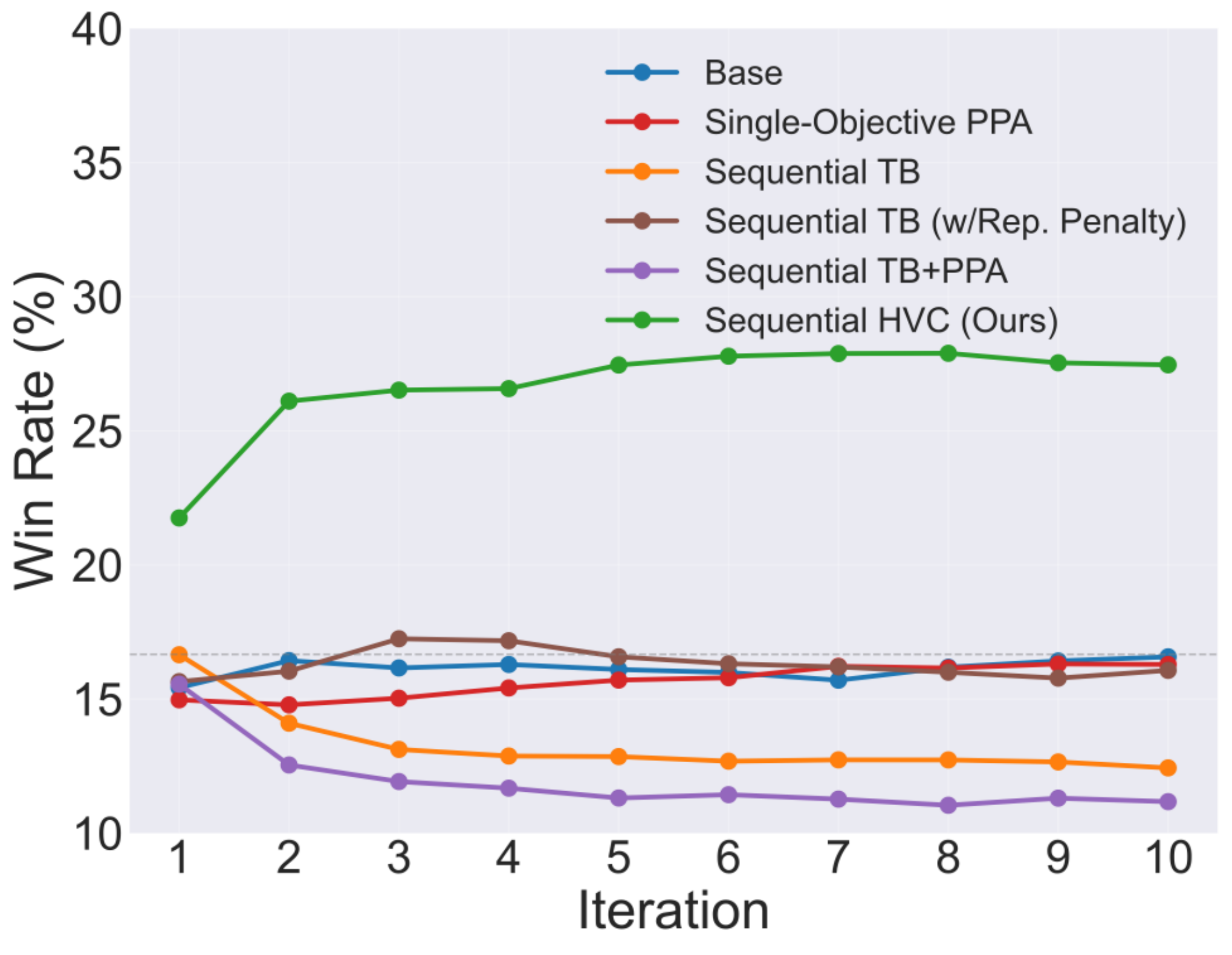}
    \caption{Win Rate}
    \label{fig:rep_penalty_win_rate}
  \end{subfigure}
  \hfill
  \begin{subfigure}[t]{0.48\linewidth}
    \centering
    \includegraphics[width=\linewidth]{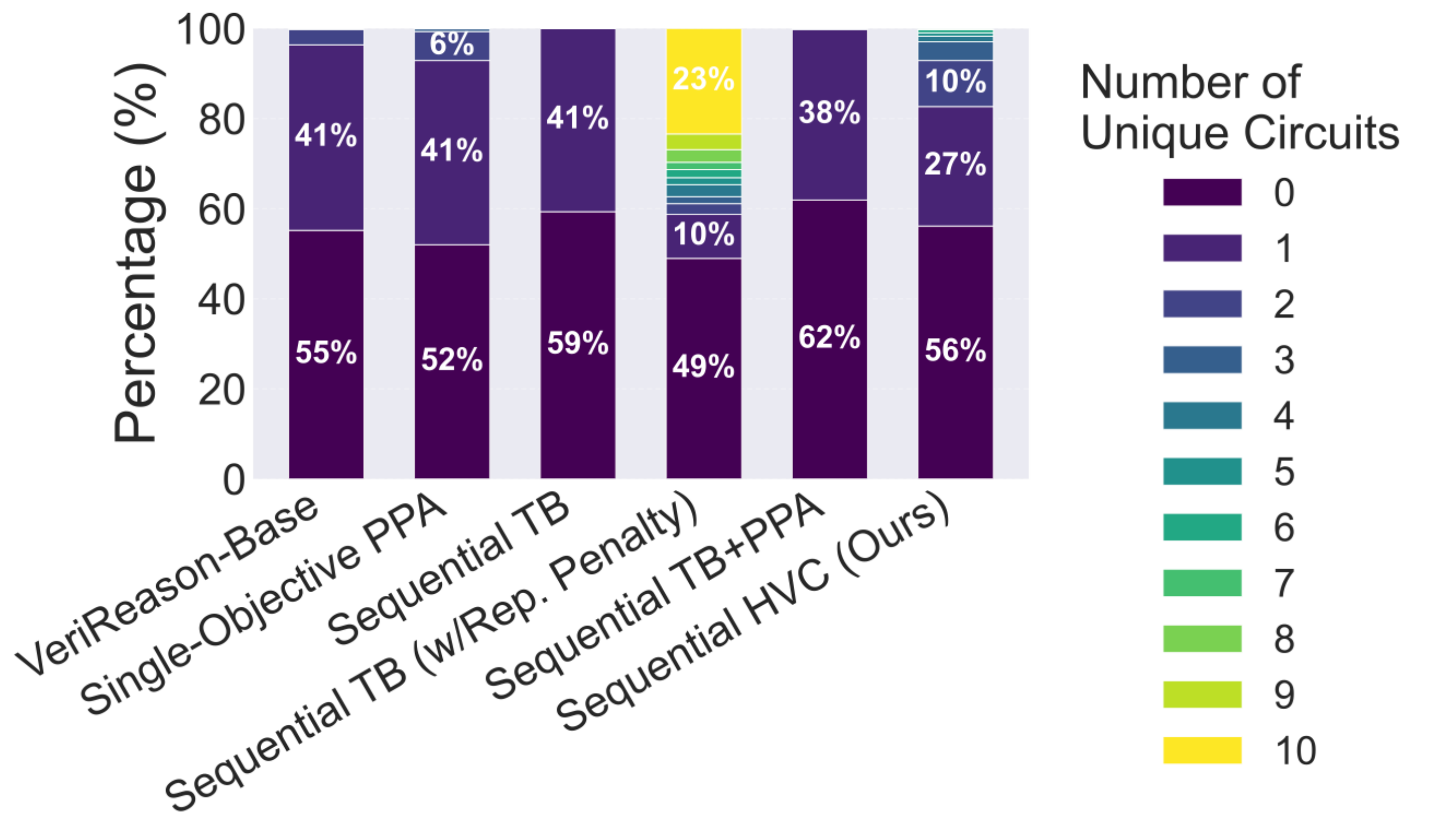}
    \caption{Circuit Diversity}
    \label{fig:rep_penalty_circuit_diversity}
  \end{subfigure}
  \caption{Effect of adding an explicit repetition penalty to the sequential baseline, evaluated in terms of relative Hypervolume (top), Win Rate (bottom left), and circuit diversity (bottom right). Although the repetition penalty reduces duplicate generations, it does not match the improvement of Sequential HVC, indicating that the gains of our method arise from directly optimizing the Pareto-front contribution rather than merely avoiding duplicates.}
  \label{fig:rep_penalty}
\end{figure}

\subsection{Relative Hypervolume of the Union across Trials}\label{appendix:union}

The main results evaluate rHV on a per-trial basis.
To assess the aggregate exploration capability of each method, we additionally evaluate rHV after taking the union of the solutions generated across multiple trials for each method.
Figure~\ref{fig:union} reports the resulting rHV, providing a view of how much of the Pareto front each method can cover when its independent trials are combined.

Figure~\ref{fig:union} arranges the results in a grid of bar charts, with rows corresponding to the number of iterations (1, 5, and 10) and columns corresponding to the number of unioned trials $N_{\mathrm{trials}} \in \{1, 5, 10, 15, 20, 25, 30\}$; each subplot compares the rHV of the five methods as bars.
For both VeriReason-3B and CodeV-R1-Distil-7B, Sequential HVC shows an advantage when $N_{\mathrm{trials}}$ is small.
For the CodeV-R1-Distil-7B experiments, even at $N_{\mathrm{trials}}=30$ Sequential HVC achieves higher rHV than all methods except the Base model, to which it remains comparable, consistent with the phenomenon reported by \citet{yue2025does}, where RL-trained models improve small-$k$ pass@$k$ performance while the base model catches up as $k$ grows.
Overall, the main contribution of our method is not to dominate the Base model under an extremely large archive budget, but to obtain diverse and high-quality circuits efficiently with a smaller inference budget.
This matters for practical use cases where designers want to quickly obtain diverse candidate circuits, and for downstream inference-time search or evolutionary algorithms that benefit from efficient generation of promising initial candidates.

\begin{figure}[p]
  \centering
  \begin{subfigure}[t]{\linewidth}
    \centering
    \includegraphics[height=0.88\textheight]{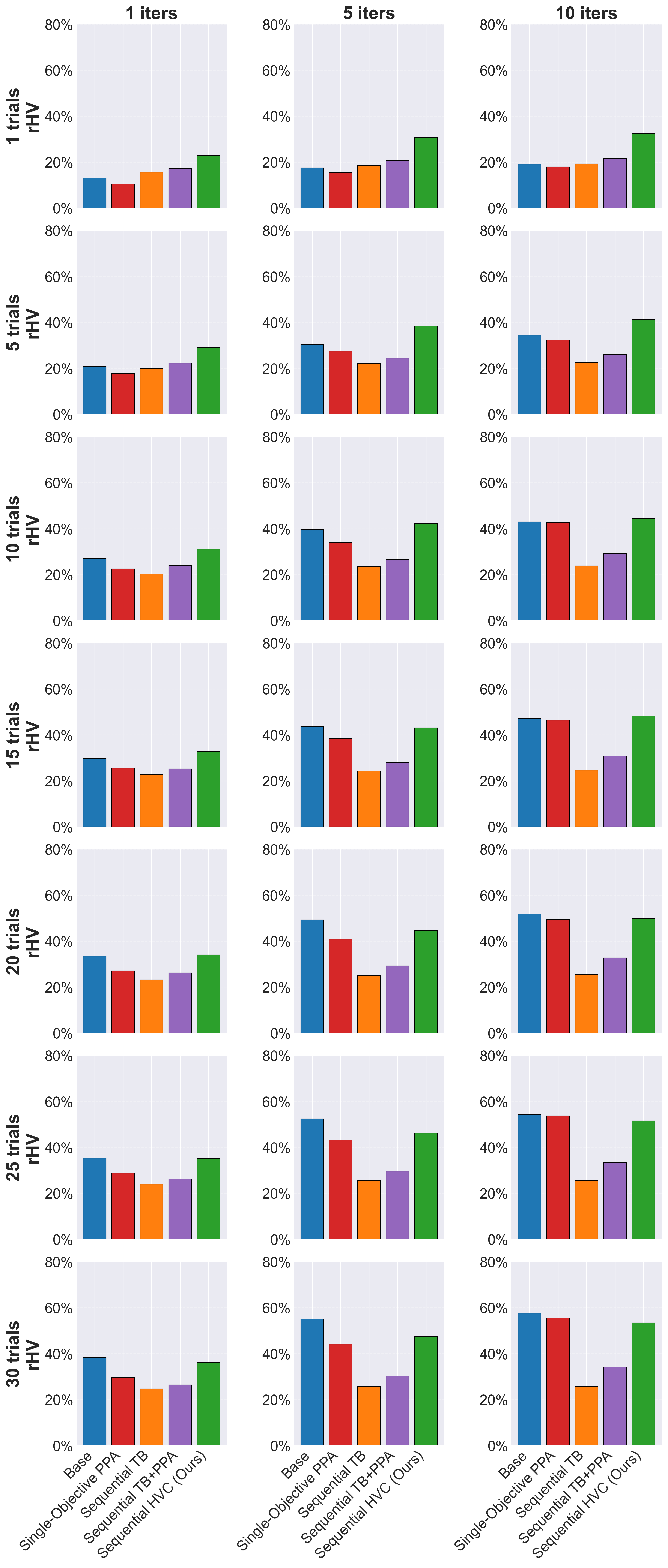}
    \caption{VeriReason-3B}
    \label{fig:union_verireason}
  \end{subfigure}
  \caption{Relative Hypervolume (rHV) evaluated on the union of solutions generated across multiple trials for each method. Aggregating the solutions from independent trials measures the overall Pareto-front coverage attainable by each method. Sequential HVC (Ours) retains its advantage over the baselines for smaller numbers of unioned trials. The results for CodeV-R1-Distil-7B are shown on the following page.}
  \label{fig:union}
\end{figure}

\begin{figure}[p]\ContinuedFloat
  \centering
  \begin{subfigure}[t]{\linewidth}
    \centering
    \includegraphics[height=0.88\textheight]{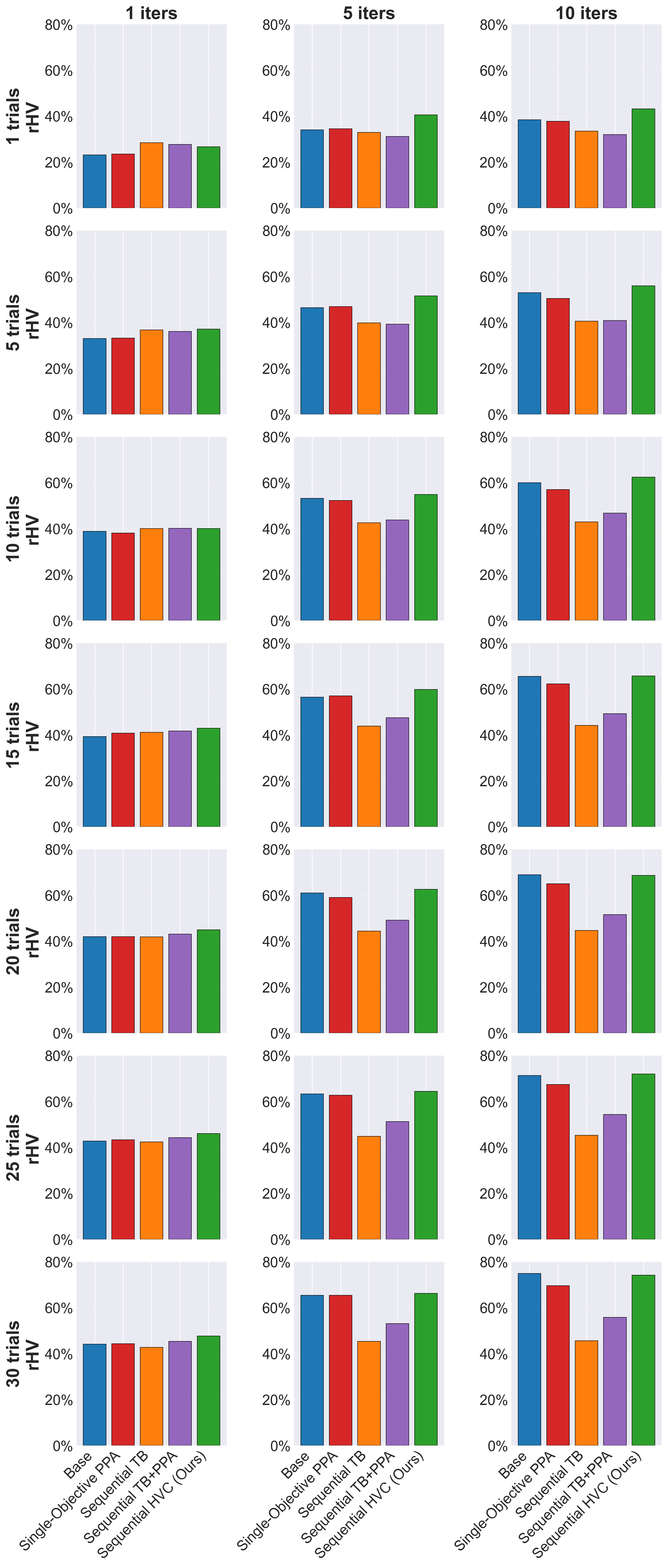}
    \caption{CodeV-R1-Distil-7B}
    \label{fig:union_codevr1}
  \end{subfigure}
  \caption{(continued) Relative Hypervolume (rHV) evaluated on the union of solutions generated across multiple trials for each method.}
\end{figure}

\end{document}